\newcommand{\cmark}{\ding{51}}%
\newcommand{\xmark}{\ding{55}}%
\pgfplotsset{%
    compat = 1.17,
    compat/show suggested version = false,
}
\DeclareRobustCommand{\includetikzgraphicsfile}[2][]{\tikzsetnextfilename{#2}\includegraphics[#1]{figures/#2.tikz}}
\newtheorem{theorem}{Theorem}
\newtheorem{proposition}[theorem]{Proposition}
\newtheorem{lemma}[theorem]{Lemma}
\newtheorem*{theorem*}{Theorem}
\newtheorem*{proposition*}{Proposition}
\DeclareMathOperator{\SO}{\textup{SO}}
\DeclareMathOperator{\Og}{\textup{O}}
\icmltitlerunning{O$n$ Learning Deep O($n$)-Equivariant Hyperspheres}
\begin{document}
\twocolumn[
\icmltitle{O$n$ Learning Deep O($n$)-Equivariant Hyperspheres}

\begin{icmlauthorlist}
\icmlauthor{Pavlo Melnyk}{yyy}
\icmlauthor{Michael Felsberg}{yyy}
\icmlauthor{Mårten Wadenbäck}{yyy}
\icmlauthor{Andreas Robinson}{yyy}
\icmlauthor{Cuong Le}{yyy}
\end{icmlauthorlist}

\icmlaffiliation{yyy}{Computer Vision Laboratory, Department of Electrical Engineering, Linköping University, Sweden}
\icmlcorrespondingauthor{Michael Felsberg}{michael.felsberg@liu.se}
\vskip 0.3in
]
\printAffiliationsAndNotice{}

\begin{abstract}
\textit{In this paper, we utilize hyperspheres and regular $n$-simplexes and propose an approach to learning deep features equivariant under the transformations of $n$D reflections and rotations, encompassed by the powerful group of $\Og(n)$.
Namely, we propose $\Og(n)$-equivariant neurons with spherical decision surfaces that generalize to any dimension $n$, which we call \texttt{Deep Equivariant Hyperspheres}.
We demonstrate how to combine them in a network that directly operates on the basis of the input points and propose an invariant operator based on the relation between two points and a sphere, which as we show, turns out to be a Gram matrix. 
Using synthetic and real-world data in $n$D, we experimentally verify our theoretical contributions and find that our approach is superior to the competing methods for $\Og(n)$-equivariant benchmark datasets (classification and regression), demonstrating a favorable speed/performance trade-off.
The code is available on \href{https://github.com/pavlo-melnyk/equivariant-hyperspheres}{GitHub}.}
\end{abstract}

\section{Introduction}
\label{introduction}
\textit{Spheres}\footnote{By \textit{sphere}, we generally refer to an $n$D sphere or a hypersphere; \eg, a circle is thus a 2D sphere.} serve as a foundational concept in Euclidean space while simultaneously embodying the essence of non-Euclidean geometry through their intrinsic curvature and non-linear nature. This motivated their usage as decision surfaces encompassed by spherical neurons \citep{perwass2003spherical, melnyk2020embed}.

Felix Klein's Erlangen program of 1872 \citep{hilbert_cohnvossen_1952} introduced a methodology to unify non-Euclidean geometries, emphasizing the importance of studying geometries through their invariance properties under transformation groups.
Similarly, geometric deep learning (GDL) as introduced by \citet{bronstein2017geometric, bronstein2021geometric} constitutes a unifying framework for various neural architectures.
This framework is built from the first principles of geometry---symmetry and scale separation---and enables tractable learning in high dimensions.
\begin{figure*}
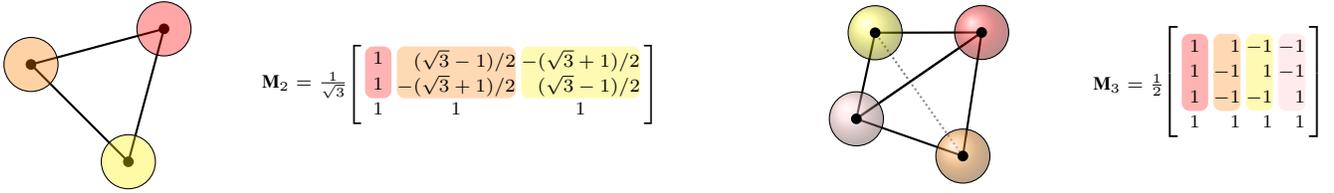

    \label{fig:main_figure}
    \includetikzgraphicsfile[width=\linewidth]{simplexes_and_M}
    \caption{The central components of \texttt{Deep Equivariant Hyperspheres} (best viewed in color): regular $n$-simplexes with the $n$D spherical decision surfaces located at their vertices and the simplex change-of-basis matrices $\textbf{M}_n$ (displayed for 
    $n=2$ and $n=3$).}
\end{figure*}

Symmetries play a vital role in preserving structural information of geometric data and allow models to adjust to different geometric transformations.
This flexibility ensures that models remain robust and accurate, even when the input data undergo various changes. 
In this context, spheres exhibit a maximal set of symmetries compared to other geometric entities in Euclidean space. 

The orthogonal group $\Og(n)$ fully encapsulates the symmetry of an $n$D sphere, including both rotational and reflection symmetries.
Integrating these symmetries into a model as an inductive bias is often a crucial requirement for problems in natural sciences and the respective applications, \eg, molecular analysis, protein design and assessment, or catalyst design \citep{rupp2012fast, ramakrishnan2014quantum, townshend2020atom3d, jing2021learning, lan2022adsorbml}.

In this paper, we consider data that live in Euclidean space (such as point clouds) and undergo rotations and reflections, \ie, transformations of the $\Og(n)$-group. 
Enriching the theory of steerable 3D spherical neurons \citep{melnyk2022steerable}, we present a method for learning $\Og(n)$-equivariant deep features using regular $n$-simplexes\footnote{We use the fact that a regular $n$-simplex contains $n+1$ equidistant vertices in $n$D.} and $n$D spheres, which we call \texttt{Deep Equivariant Hyperspheres} (see Figure~\ref{fig:main_figure}).
The name also captures the fact that the vertices of a regular $n$-simplex lie on an $n$D sphere, and that our results enable combining equivariant hyperspheres in multiple layers, thereby enabling $deep$ propagation.

Our main contributions are summarized as follows:
\begin{itemize}[topsep=0pt,itemsep=0pt]
    \item We propose O($n$)-equivariant spherical neurons, called \texttt{Deep Equivariant Hyperspheres}, readily generalizing to any dimension (see Section~\ref{sec:the_meat}). 
    \item We define and analyze generalized concepts for a network composed of the proposed neurons, including the invariant operator modeling the relation between two points and a sphere \eqref{eq:Delta}.
    \item Conducting experimental validation on both synthetic and real-world data in $n$D, we demonstrate the soundness of the developed theoretical framework, outperforming the related equi- and invariant methods and exhibiting a favorable speed/performance trade-off.
\end{itemize}

\section{Related work}
\label{sec:related_work}
The concept of spheres is also an essential part of spherical convolutional neural networks (CNNs) and CNNs designed to operate on 360 imagery \citep{Coors2018ECCV, su2017learning, Esteves_2018_ECCV, cohen2018spherical, perraudin2019deepsphere}.
In contrast, our method does not map input data \textit{on} a sphere, $\mathcal{S}^2$, nor does it perform convolution on a sphere. Instead, it embeds input in a higher-dimensional Euclidean space by means of a quadratic function.
Namely, our work extrapolates the ideas from prior work by \citet{perwass2003spherical, melnyk2020embed}, in which spherical decision surfaces and their symmetries have been utilized for constructing equivariant models for the 3D case \citep{melnyk2022steerable, melnyk2022tetrasphere}.
We carefully review these works in Section~\ref{sec:background}.

Similarly to the approach of \cite{ruhe2023clifford}, our \texttt{Deep Equivariant Hyperspheres} directly operate on the basis of the input points, not requiring constructing an alternative one, such as a steerable spherical harmonics basis.
Constructing an alternative basis is a key limitation of many related methods \citep{anderson2019cormorant,thomas2018tensor,fuchs2020se3}.
Our method also generalizes to the orthogonal group of any dimensionality.

Another type of method is such as by \citet{finzi2021practical}, a representation method building equivariant feature maps by computing an integral over the respective group (which is intractable for continuous Lie groups and hence, requires coarse approximation).
Another category includes methods operating on scalars and vectors: they update the vector information by learning the parameters conditioned on scalar information and multiplying the vectors with it \citep{satorras2021n, jing2021learning}, or by learning the latent equivariant features \citep{deng2021vector}.

While the methods operating on hand-crafted O($n$)-invariant features are generally not as relevant to our work, the methods proposed by \citet{villar2021scalars} and \citet{xu2021sgmnet} are: The central component of the scalars method \citep{villar2021scalars} is the computation of the pair-wise scalar product between the input points, just as the $\Og(3)$-invariant descriptor by \citet{xu2021sgmnet} is a sorted Gram matrix (SGM) of input point coordinates, which encodes global context using relative distances and angles between the points. 
In Section~\ref{sec:higher_order}, we show how this type of computation naturally arises when one considers the relation between two points and a sphere for computing invariant features in our approach, inspired by the work of \citet{li2001generalized}.

\section{Background}
\label{sec:background}
In this section, we present a comprehensive background on the theory of spherical neurons and their rotation-equivariant version, as well as on the general geometric concepts used in our work.
\subsection{Spherical neurons via non-linear embedding}
\label{sec:spherical_neurons}
Spherical neurons \citep{perwass2003spherical, melnyk2020embed} are neurons with, as the name suggests, spherical decision surfaces.
By virtue of conformal geometric algebra \citep{li2001generalized}, \citet{perwass2003spherical} proposed to embed the data vector $\textbf{x}\in\mathbb{R}^n$ and represent the sphere with center $\textbf{c} =(c_1, \dots, c_n)\in\mathbb{R}^n$ and radius $r\in\mathbb{R}$ respectively as
\begin{equation}
	\label{hypersphere_in_r}
	\begin{aligned}
		\textbf{\textit{X}} = \big(x_1, \dots, x_n, -1, -\frac{1}{2}\lVert\textbf{x}\rVert^2\big)\in\mathbb{R}^{n+2},\\ 
		\textbf{\textit{S}} = \big(c_1, \dots, c_n, \frac{1}{2}(\lVert\textbf{c}\rVert^2 - r^2), 1\big)\in\mathbb{R}^{n+2},
	\end{aligned}
\end{equation}
and used their scalar product $\textbf{\textit{X}}^\top \textbf{\textit{S}} = -\frac{1}{2}\norm{\textbf{x}-\textbf{c}}^2 + \frac{1}{2}r^2$
as a classifier, \ie, the spherical neuron:
\begin{equation}
    \label{eq:spherical_neuron}
     f_{S}(\textbf{\textit{X}}; \textbf{\textit{S}}) = \textbf{\textit{X}}^\top \textbf{\textit{S}},
 \end{equation}
with learnable parameters $\textbf{\textit{S}}\in \mathbb{R}^{n+2}$.

The sign of this scalar product depends on the position of the point $\textbf{x}$ relative to the sphere $(\textbf{c}, r)$: inside the sphere if positive, outside of the sphere if negative, and on the sphere if zero \citep{perwass2003spherical}. 
Geometrically, the activation of the spherical neuron \eqref{eq:spherical_neuron} determines the cathetus length of the right triangle formed by $\textbf{x}$, $\textbf{c}$, and the corresponding point on the sphere (see Figure~2~in~\citet{melnyk2020embed}).

We note that with respect to the data vector $\textbf{x}\in\mathbb{R}^n$, a spherical neuron represents a non-linear function $f_{S}(\,\cdot\,;\textbf{\textit{S}}): \mathbb{R}^{n+2} \rightarrow \mathbb{R}$, due to the inherent non-linearity of the embedding \eqref{hypersphere_in_r}, and therefore, does not necessarily require an activation function, as observed by \citet{melnyk2020embed}.
The components of $\textbf{\textit{S}}$ in \eqref{hypersphere_in_r} can be treated as \textit{independent} learnable parameters. 
In this case, a spherical neuron learns a \textit{non-normalized} sphere of the form $\widetilde{\textbf{\textit{S}}} = ({s_1}, \dots, {s_{n+2}}) \in \mathbb{R}^{n+2}$, which represents the same decision surface as its normalized counterpart defined in \eqref{hypersphere_in_r}, thanks to the homogeneity of the embedding \citep{perwass2003spherical, li2001generalized}.

\subsection{Equi- and invariance under $\Og(n)$-transformations}
\label{sec:equivariance}
The elements of the orthogonal group $\Og(n)$ can be represented as $n \times n$ matrices $\textbf{\textit{R}}$ with the properties $\textbf{\textit{R}}^\top \textbf{\textit{R}} = \textbf{\textit{R}} \textbf{\textit{R}}^\top = \textbf{I}_n$, where $\textbf{I}_n$ is the identity matrix, and $\det{\textbf{\textit{R}}} = \pm 1$, geometrically characterizing $n$D rotations and reflections. 
The special orthogonal group $\SO(n)$ is a subgroup of $\Og(n)$ and includes only orthogonal matrices with the positive determinant, representing rotations.

We say that a function $f : \mathcal{X} \rightarrow \mathcal{Y}$ is $\Og(n)$-equivariant if for every $\textbf{\textit{R}} \in \Og(n)$ there exists the transformation representation, $\rho(\textbf{\textit{R}})$, in the function output space, $\mathcal{Y}$, such that
\begin{equation}
\label{eq:equivariance}
    \rho(\textbf{\textit{R}}) \, f(\textbf{\textup{x}}) = f(\textbf{\textit{R}} \textbf{\textup{x}}) \text{\quad for all~} \textbf{\textit{R}} \in \Og(n), \; \textbf{\textup{x}} \in \mathcal{X} \subseteq \mathbb{R}^n.
\end{equation}
We call a function $f: \mathcal{X} \rightarrow \mathcal{Y}$ $\Og(n)$-invariant if for every  $\textbf{\textit{R}} \in \Og(n)$, $\rho(\textbf{\textit{R}}) = \textbf{I}_n$. That is, if
\begin{equation}
\label{eq:invariance}
    f(\textbf{\textup{x}}) = f(\textbf{\textit{R}} \textbf{\textup{x}}) \text{\quad for all~} \textbf{\textit{R}} \in \Og(n), \; \textbf{\textup{x}} \in \mathcal{X} \subseteq \mathbb{R}^n.
\end{equation}
Following the prior work convention \citep{melnyk2022steerable, melnyk2022tetrasphere} hereinafter, we write  $\textbf{\textit{R}}$ to denote the same $n$D rotation/reflection as an $n \times n$ matrix in the Euclidean space $\mathbb{R}^n$, as an $(n+1) \times (n+1)$ matrix in the projective (homogeneous) space $\textit{P}(\mathbb{R}^n) \subset \mathbb{R}^{n+1}$, and as an $(n+2) \times (n+2)$ matrix in $\mathbb{R}^{n+2}$.
For the latter two cases, we achieve this by appending ones to the diagonal of the original $n \times n$ matrix without changing the transformation itself \citep{melnyk2020embed}.
\subsection{Steerable 3D spherical neurons and TetraSphere}
\label{sec:steerable_3d_neurons}
Considering the 3D case, \citet{melnyk2022steerable} showed that a spherical neuron \citep{perwass2003spherical,melnyk2020embed} can be \textit{steered}.
In this context, \textit{steerability} is defined as the ability of a function to be written as a linear combination of the rotated versions of itself, called \textit{basis functions} \citep{freeman1991design, knutsson1992aframework}.
For details, see the Appendix (Section~\ref{sec:A_background}).

According to \citet{melnyk2022steerable}, a 3D steerable filter consisting of spherical neurons needs to comprise a \textit{minimum} of four 3D spheres: one learnable spherical decision surface $\textbf{\textit{S}} \in \mathbb{R}^5$ \eqref{hypersphere_in_r} and its three copies \textit{rotated} into the other three vertices of the regular tetrahedron, using one of the results of \citet{freeman1991design} that the basis functions must be distributed in the space uniformly. 

To construct such a filter, \ie, a steerable 3D spherical neuron, the main (learned) sphere center $\textbf{c}_0$ needs to be rotated into $\frac{\norm{\textbf{c}_0}}{\sqrt{3}}\,(1,1,1)$ by the corresponding (geodesic) rotation $\textbf{\textit{R}}_O$. 
The resulting sphere center is then rotated into the other three vertices of the regular tetrahedron.
This is followed by rotating all four spheres back to the original coordinate system.
A steerable 3D spherical neuron can thus be defined by means of the $4 \times 5$ matrix $B(\textbf{\textit{S}})$ containing the four spheres:
\begin{equation}
	\label{eq:sphere_filter_bank}
        \textup{F}(\textbf{\textit{X}};\textbf{\textit{S}}) = B(\textbf{\textit{S}}) \textbf{\textit{X}}~,\quad
	B(\textbf{\textit{S}}) = 
	\begin{bmatrix}
		(\textbf{\textit{R}}_O^{\top}\, \textbf{\textit{R}}_{T_i}\, \textbf{\textit{R}}_O\, \textbf{\textit{S}})^\top\\
	\end{bmatrix}_{i=1}^{4}~,
\end{equation}
where  ${\textbf{\textit{X}}} \in \mathbb{R}^{5}$ is the input 3D point embedded using \eqref{hypersphere_in_r}, $\{\textbf{\textit{R}}_{T_i}\}_{i=1}^{4}$ is the $\mathbb{R}^5$ rotation isomorphism corresponding to the rotation from the first vertex, \ie, $(1, 1, 1)$ to the $i$-th vertex of the regular tetrahedron\footnote{Therefore, $\textbf{\textit{R}}_{T_1}=\textbf{I}_5$, \ie, the original $\textbf{\textit{S}}$ remains at $\textbf{c}_0$.}.

\citet{melnyk2022steerable} showed that steerable 3D spherical neurons are $\SO(3)$-equivariant (or more precisely, $\Og(3)$-equivariant, as remarked in \citet{melnyk2022tetrasphere}):
\begin{equation}
\label{eq:filter_bank_equivariance}
    V_{\textbf{\textit{R}}} \, B(\textbf{\textit{S}}) \, \textbf{\textit{X}} = B(\textbf{\textit{S}})\,\textbf{\textit{R}}\textit{\textbf{X}},\quad
    V_{\textbf{\textit{R}}} = \textup{\textbf{M}}^\top \textbf{\textit{R}}_O\, \textbf{\textit{R}}\, \textbf{\textit{R}}_O^{\top} \textup{\textbf{M}} ~,
\end{equation}
where $\textbf{\textit{R}}$ is a representation of the 3D rotation in {$\mathbb{R}^5$}, and $V_{\textbf{\textit{R}}} \in G < \SO(4)$ is the 3D rotation representation in the filter bank output space, with $\textup{\textbf{M}} \in \SO(4)$ being a change-of-basis matrix that holds the homogeneous coordinates of the tetrahedron vertices in its columns as 
\begin{equation}
    \label{eq:M3}
    \textbf{M}  =  
	\begin{bmatrix}
		\textbf{m}_1 & \textbf{m}_2 & \textbf{m}_3 & \textbf{m}_4	\end{bmatrix} = \frac{1}{2}
	\begin{bmatrix}
		1 &  \phantom{-}1 & -1             &  -1   \\
		1 & -1            &  \phantom{-}1  &  -1   \\
	    1 & -1            & -1             &  \phantom{-}1   \\
	    1 &  \phantom{-}1 &  \phantom{-}1  &  \phantom{-}1   \\
	\end{bmatrix}.
\end{equation}
We note that with respect to the input vector $\textbf{x}\in\mathbb{R}^3$, a steerable 3D spherical neuron represents a non-linear rotational-equivariant function $\textup{F}(\,\cdot\,;\textbf{\textit{S}}): \mathbb{R}^5 \rightarrow \mathbb{R}^4$ with the learnable parameters $\textbf{\textit{S}} \in \mathbb{R}^5$. 
\paragraph{TetraSphere} As the first reported attempt to \textit{learn} steerable 3D spherical neurons in an end-to-end approach, \citet{melnyk2022tetrasphere} has presently introduced an approach for $\Og(3)$-invariant point cloud classification based on said neurons and the VN-DGCNN architecture~\citep{deng2021vector}, called TetraSphere.

Given the point cloud input $\mathcal{X} \in \mathbb{R}^{N \times 3}$, the TetraSphere approach suggests to learn 4D features of each point by means of the TetraTransform layer $l_{\textup{TT}}(\,\cdot\,;\textbf{S}): \mathbb{R}^{N \times 3} \rightarrow \mathbb{R}^{N \times  4 \times K}$ that consists of $K$ steerable spherical neurons $B(\textbf{\textit{S}}_k)$ (see \eqref{eq:sphere_filter_bank}) that are shared among the points.
After the application of TetraTransform, pooling over the $K$ dimensions takes place, and the obtained feature map is then propagated through the VN-DGCNN network as-is.
However, the questions of how to combine the steerable neurons in multiple layers and how to make them process data in dimensions other than $3$ have remained open.

\subsection{Regular simplexes}
\label{sec:regular_simplexes}
Geometrically, a regular $n$-simplex represents $n+1$ equidistant points in $n$D \citep{elte2006semiregular}, lying on an $n$D sphere with unit radius. In the 2D case, the regular simplex is an equilateral triangle; in 3D, a regular tetrahedron, and so on.

Following \citet{cevikalp2023deep}, we compute the Cartesian coordinates of a regular $n$-simplex as $n+1$ vectors $\textbf{p}_i \in \mathbb{R}^n$:
\begin{equation}
\label{eq:simplex}
\begin{aligned} \textbf{p}_i &= \begin{cases} 
      n^{-1/2} \, \textup{\textbf{1}},  & i = 1 \\
      \kappa \, \textbf{1} + \mu \, \textbf{e}_{i-1}, & 2 \leq i \leq n+1 ~,
    \end{cases}\\
  \kappa &= -\frac{1+\sqrt{n+1}}{n^{3/2}}~,~~ \mu = \sqrt{1 + \frac{1}{n}}~,
  \end{aligned}\\
\end{equation}
where $\textbf{1} \in \mathbb{R}^n$ is a vector with all elements equal to 1 and $\textbf{e}_{i}$ is the natural basis vector with the $i$-th element equal to 1.

For the case $n=3$, we identify the following connection between \eqref{eq:M3} and \eqref{eq:simplex}: the columns of $\textbf{M}$, $\textbf{m}_i \in \mathbb{R}^4$, are the coordinates of the regular 3-simplex appended with a constant and normalized to unit length; that is,
\begin{equation}
\textbf{m}_i = \frac{1}{p} \begin{bmatrix} \textbf{p}_i \\  1/\sqrt{3} \end{bmatrix} \textup{with } p =  \left\lVert \begin{bmatrix} \textbf{p}_i \\ 1/\sqrt{3} \end{bmatrix} \right\rVert, 1 \leq i \leq 4.
\end{equation}

\section{Deep Equivariant Hyperspheres}
\label{sec:the_meat}
In this section, we provide a complete derivation of the proposed $\Og(n)$-equivariant neuron based on a learnable spherical decision surface and multiple transformed copies of it, as well as define and analyze generalized concepts of equivariant bias, non-linearities, and multi-layer setup. 

While it is intuitive that in higher dimensions one should use more copies (\ie, vertices) than in the 3D case \citep{melnyk2022steerable}, it is uncertain exactly how many are needed.
We hypothesize that the vertices should constitute a regular $n$-simplex ($n+1$ vertices) and rigorously prove it in this section.
\subsection{The simplex change of basis}
\label{sec:simplex_change_of_basis}
We generalize the change-of-basis matrix \eqref{eq:M3} to $n$D  by introducing $\textup{\textbf{M}}_n$, an $(n+1) \times (n+1)$ matrix holding in its columns the coordinates of the regular $n$-simplex appended with a constant and normalized to unit length:
\begin{equation}
	\label{eq:nd_basis_matrix}
	\textbf{M}_n  =  
	\begin{bmatrix}
		\textbf{m}_i \end{bmatrix}_{i=1}^{n+1},~~ \textbf{m}_i = \frac{1}{p} \begin{bmatrix}\textbf{p}_i \\  n^{-1/2}\end{bmatrix},~~p=  \left\lVert \begin{bmatrix} \textbf{p}_i \\ n^{-1/2} \end{bmatrix} \right\rVert, 
\end{equation}
where the norms $p$ are constant, since $\lVert\textbf{p}_i\rVert = \lVert\textbf{p}_j\rVert$ for all $i$ and $j$ by definition of a regular simplex.
\begin{proposition}
\label{pr:M_orthogonal}
    Let $\textup{\textbf{M}}_n$ be the-change-of-basis matrix defined in \eqref{eq:nd_basis_matrix}. Then $\textup{\textbf{M}}_n$ is an $(n+1)$D rotation or reflection, \ie, $\textup{\textbf{M}}_n \in \Og(n+1)$ (see Section~\ref{sec:A_numeric_instances} in the Appendix for numeric examples). 
\end{proposition}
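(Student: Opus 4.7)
The plan is to show that $\textbf{M}_n^\top \textbf{M}_n = \textbf{I}_{n+1}$, which means the $n+1$ columns $\textbf{m}_i$ form an orthonormal basis of $\mathbb{R}^{n+1}$. Unit norm of each column is built into the definition, since $\textbf{m}_i$ is obtained by explicitly dividing $[\textbf{p}_i^\top,\, n^{-1/2}]^\top$ by its own Euclidean norm $p$. Hence the entire argument reduces to verifying that distinct columns are mutually orthogonal. A direct computation gives
\begin{equation*}
\textbf{m}_i^\top \textbf{m}_j \;=\; \frac{1}{p^2}\bigl(\textbf{p}_i^\top \textbf{p}_j + \tfrac{1}{n}\bigr),
\end{equation*}
so the proof boils down to establishing that $\textbf{p}_i^\top \textbf{p}_j = -1/n$ for $i\neq j$.

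To extract that identity from \eqref{eq:simplex}, I would first record two elementary properties of the vertices: (a) $\lVert \textbf{p}_i\rVert = 1$ for every $i$, and (b) $\sum_{i=1}^{n+1}\textbf{p}_i = 0$, i.e., the simplex is centered at the origin. Both are short routine checks using the explicit formulas for $\kappa$ and $\mu$; in particular they yield $p^2 = 1 + 1/n = (n+1)/n$ as a by-product. Next, regularity of the simplex tells us that $\lVert \textbf{p}_i - \textbf{p}_j\rVert^2$ is the same for all $i\neq j$, which together with (a) forces $\textbf{p}_i^\top \textbf{p}_j$ to take a common value $\alpha$ for all off-diagonal pairs. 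Expanding $0 = \bigl\lVert\sum_i \textbf{p}_i\bigr\rVert^2 = (n+1) + n(n+1)\alpha$ then pins $\alpha = -1/n$.

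Plugging $\textbf{p}_i^\top\textbf{p}_j = -1/n$ and $p^2 = (n+1)/n$ back into the expression above shows $\textbf{m}_i^\top\textbf{m}_j = 0$ for $i\neq j$, which completes the orthogonality check and yields $\textbf{M}_n \in \Og(n+1)$. The only mildly delicate step is the verification of properties (a) and (b) from the piecewise formula \eqref{eq:simplex}, where the $i=1$ case and the $i\geq 2$ case must be handled separately and one has to manipulate $\kappa$, $\mu$, and the constant vector $\textbf{1}$ carefully; everything else is an application of the centered-simplex identity. I do not expect any real obstacle beyond keeping the algebra tidy, and I would not attempt to determine the sign of $\det \textbf{M}_n$ since the statement only claims membership in $\Og(n+1)$, not in $\SO(n+1)$.
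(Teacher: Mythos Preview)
Your proposal is correct and follows essentially the same approach as the paper's proof: both verify $\textbf{M}_n^\top\textbf{M}_n = \textbf{I}_{n+1}$ by reducing the off-diagonal entries to $(\textbf{p}_i^\top\textbf{p}_j + 1/n)/p^2 = 0$. The only minor difference is that the paper asserts $\textbf{p}_i^\top\textbf{p}_j = -1/n$ directly by appeal to the explicit simplex formula \eqref{eq:simplex}, whereas you derive it more conceptually via the centering identity $\sum_i \textbf{p}_i = 0$ together with $\lVert\textbf{p}_i\rVert=1$.
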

\begin{proof}
We want to show that $\textbf{M}_n^\top \textbf{M}_n = \textbf{I}_{n+1}$, which will prove that $\textbf{M}_n$ is orthogonal.
The diagonal elements of $\textbf{M}_n^\top \textbf{M}_n$ are $\textbf{m}_i^\top \textbf{m}_i = \lVert\textbf{m}_i\rVert ^2 = 1$ since $\lVert \textbf{m}_i \rVert = 1$.
The off-diagonal elements are found as $\textbf{m}_i^\top \textbf{m}_j = (\textbf{p}_i^\top \textbf{p}_j + n^{-1}) / p^2$ for $i \neq j$, where $p$ is defined in \eqref{eq:nd_basis_matrix}.
Note that $\textbf{p}_i^\top \textbf{p}_j$ is the same for all $i$ and $j$ with $i \neq j$ since, by definition of a regular simplex, the vertices $\textbf{p}_i$ are spaced uniformly.
Note that $\textbf{p}_i^\top \textbf{p}_j = -n^{-1}$ for all $i \neq j$ by definition \eqref{eq:simplex}.
Hence, the off-diagonal elements of $\textbf{M}_n^\top \textbf{M}_n$ are zeros and $\textbf{M}_n^\top \textbf{M}_n = \textbf{I}_{n+1}$.
\end{proof}
Since $\textup{\textbf{M}}_n \in \Og(n+1)$, the sign of $\det{\textup{\textbf{M}}}_n$ is determined by the number of reflections required to form the transformation.
In the case of a regular $n$-simplex, the sign of the determinant depends on the parity of $n$ \textbf{and} the configuration of the simplex vertices. 
In our case, $\textup{\textbf{M}}_n$ is a rotation for odd $n$, \ie, $\det{\textup{\textbf{M}}}_n = 1$, and a reflection for even $n$. 
Consider, for example, the case $n=3$. The matrix $\textbf{M}_3$ shown in \eqref{eq:M3} has $\det{\textbf{M}_3} = 1$, thus, is a 4D rotation, as stated in Section~\ref{sec:steerable_3d_neurons}.

\begin{lemma}
\label{lem:lemma}
Let $\textbf{\textup{M}}_n$ be the change-of-basis matrix defined in \eqref{eq:nd_basis_matrix}, and $\textbf{\textup{P}}_n$ an $n \times (n+1)$ matrix holding the regular $n$-simplex vertices, $\textbf{\textup{p}}_i$, in its columns, and $p = \left\lVert \begin{bmatrix} \textbf{\textup{p}}_i \\ n^{-1/2} \end{bmatrix} \right\rVert$, as defined in \eqref{eq:nd_basis_matrix}. 
Then
\begin{equation}
    \label{eq:M_times_P^T}
     \textbf{\textup{M}}_n \textbf{\textup{P}}_n^\top = p \begin{bmatrix}
        \textbf{\textup{I}}_n\\
        \textbf{\textup{0}}^\top        
    \end{bmatrix}.
\end{equation}
\end{lemma}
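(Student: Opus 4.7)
The plan is to exploit Proposition~\ref{pr:M_orthogonal} together with a block decomposition of $\textbf{M}_n$. First I would observe that, by inspection of \eqref{eq:nd_basis_matrix}, one can stack the columns $\textbf{m}_i$ to write
\[
\textbf{M}_n \;=\; \frac{1}{p}\begin{bmatrix} \textbf{P}_n \\[2pt] n^{-1/2}\,\textbf{1}^\top \end{bmatrix},
\]
where $\textbf{1}\in\mathbb{R}^{n+1}$ is the all-ones vector. Multiplying on the right by $\textbf{P}_n^\top$ in block form then gives
\[
\textbf{M}_n\textbf{P}_n^\top \;=\; \frac{1}{p}\begin{bmatrix} \textbf{P}_n\textbf{P}_n^\top \\[2pt] n^{-1/2}\,\textbf{1}^\top\textbf{P}_n^\top \end{bmatrix},
\]
so the claim reduces to showing the two block identities
\[
\textbf{P}_n\textbf{P}_n^\top = p^2\,\textbf{I}_n \qquad\text{and}\qquad \textbf{1}^\top\textbf{P}_n^\top = \textbf{0}^\top,
\]
i.e.\ that the matrix of simplex vertices has orthogonal rows of squared-norm $p^2$, and that the vertices sum to zero.

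Rather than computing $\textbf{P}_n\textbf{P}_n^\top$ from \eqref{eq:simplex} by hand, I would extract both identities from Proposition~\ref{pr:M_orthogonal} for free. Since $\textbf{M}_n\in\Og(n+1)$, we also have $\textbf{M}_n\textbf{M}_n^\top = \textbf{I}_{n+1}$. Expanding the left-hand side with the same block form,
\[
\textbf{M}_n\textbf{M}_n^\top \;=\; \frac{1}{p^2}\begin{bmatrix} \textbf{P}_n\textbf{P}_n^\top & n^{-1/2}\,\textbf{P}_n\textbf{1} \\[2pt] n^{-1/2}\,\textbf{1}^\top\textbf{P}_n^\top & n^{-1}(n+1) \end{bmatrix} \;=\; \textbf{I}_{n+1},
\]
and matching blocks yields exactly $\textbf{P}_n\textbf{P}_n^\top = p^2\,\textbf{I}_n$ and $\textbf{P}_n\textbf{1} = \textbf{0}$ (so also $\textbf{1}^\top\textbf{P}_n^\top = \textbf{0}^\top$); the scalar block just reconfirms $p^2 = (n+1)/n$, consistent with $\|\textbf{p}_i\|=1$ from \eqref{eq:simplex}.

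Substituting these back into the block expression for $\textbf{M}_n\textbf{P}_n^\top$ gives
\[
\textbf{M}_n\textbf{P}_n^\top \;=\; \frac{1}{p}\begin{bmatrix} p^2\,\textbf{I}_n \\[2pt] \textbf{0}^\top \end{bmatrix} \;=\; p\begin{bmatrix} \textbf{I}_n \\[2pt] \textbf{0}^\top \end{bmatrix},
\]
which is the stated identity. The main (minor) obstacle is simply being careful that the block shapes line up correctly: $\textbf{P}_n$ is $n\times(n+1)$, $\textbf{M}_n$ is $(n+1)\times(n+1)$, and the bottom row of $\textbf{M}_n$ is a constant $n^{-1/2}/p$ times $\textbf{1}^\top$. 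Once that bookkeeping is right, the proof is essentially a one-line consequence of Proposition~\ref{pr:M_orthogonal}, and I would avoid recomputing the inner products $\textbf{p}_i^\top\textbf{p}_j$ directly from \eqref{eq:simplex} since they have already been used to establish that proposition.
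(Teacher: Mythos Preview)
Your proposal is correct and follows essentially the same approach as the paper: both write $\textbf{M}_n = \tfrac{1}{p}\begin{bmatrix}\textbf{P}_n\\ n^{-1/2}\textbf{1}^\top\end{bmatrix}$, invoke Proposition~\ref{pr:M_orthogonal} to obtain $\textbf{P}_n\textbf{P}_n^\top = p^2\textbf{I}_n$, use $\textbf{1}^\top\textbf{P}_n^\top = \textbf{0}^\top$, and substitute. The only cosmetic difference is that the paper cites the centering of the simplex directly for $\textbf{P}_n\textbf{1}=\textbf{0}$, whereas you read it off from the off-diagonal block of $\textbf{M}_n\textbf{M}_n^\top=\textbf{I}_{n+1}$; your route is arguably cleaner since it derives everything uniformly from Proposition~\ref{pr:M_orthogonal}.
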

\begin{proof}
We begin by elaborating on \eqref{eq:nd_basis_matrix}:
\begin{equation}
    \label{eq:M_and_P}
    \textbf{M}_n = \frac{1}{p}\begin{bmatrix}
        \textbf{P}_n\\ n^{-1/2} \, \textbf{1}^\top
    \end{bmatrix}.
\end{equation}
We note that the norms of the rows of $\textbf{P}_n$ are also equal to $p$ since $\textbf{M}_n \in \Og(n+1)$ (as per Proposition~\ref{pr:M_orthogonal}).
Recall that $\textbf{P}_n$ is centered at the origin, and, therefore, for a constant $a \in \mathbb{R}$ and a vector of ones $\textbf{1} \in \mathbb{R}^{n+1}$, we obtain $a\,\textbf{1}^\top \textbf{P}_n^\top = \textbf{0}^\top$.
Remembering that the product $\textbf{M}_n \textbf{P}_n^\top$ is between $\mathbb{R}^{n+1}$ vectors, we plug \eqref{eq:M_and_P} into the LHS of \eqref{eq:M_times_P^T} and obtain
\begin{equation}
    \textbf{M}_n \textbf{P}_n^\top = \frac{1}{p}\begin{bmatrix}
        \textbf{P}_n\\ n^{-1/2} \, \textbf{1}^\top
    \end{bmatrix} \, \textbf{P}_n^\top  = \frac{p^2}{p} \begin{bmatrix}
        \textbf{I}_n\\
        \textbf{0}^\top        
    \end{bmatrix} = p \begin{bmatrix}
        \textbf{I}_n\\
        \textbf{0}^\top        
    \end{bmatrix}.
\end{equation}
\end{proof}

\subsection{Equivariant \texorpdfstring{$n$D}{nD} spheres}
\label{sec:main_proofs}
In this section, we generalize steerable 3D spherical neurons reviewed in Section~\ref{sec:steerable_3d_neurons}.
We denote an equivariant $n$D-sphere neuron (an \textit{equivariant hypersphere}) by means of the $(n+1)\times (n+2)$ matrix $B_n(\textbf{\textit{S}})$ for the spherical decision surface $\textbf{\textit{S}}\in\mathbb{R}^{n+2}$ with center $\textbf{c}_0 \in \mathbb{R}^n$ and an $n$D input $\textbf{x}\in\mathbb{R}^n$ embedded as $\textbf{\textit{X}}\in\mathbb{R}^{n+2}$ as
\begin{equation}
\label{eq:sphere_NDfilter_bank}
 \begin{aligned}
\textup{\textbf{F}}_n(\textbf{\textit{X}};\textbf{\textit{S}}) &= B_n(\textbf{\textit{S}})\, \textbf{\textit{X}}~,\quad\\
	B_n(\textbf{\textit{S}}) &= 
	\begin{bmatrix}
		(\textbf{\textit{R}}_O^{\top}\, \textbf{\textit{R}}_{T_i}\, \textbf{\textit{R}}_O\, \textbf{\textit{S}})^\top \\
	\end{bmatrix}_{i=1}^{n+1} ~,
 \end{aligned}
\end{equation}
where $\{\textbf{\textit{R}}_{T_i}\}_{i=1}^{n+1}$ is the $\mathbb{R}^{n+2}$ rotation isomorphism corresponding to the rotation from the first vertex to the $i$-th vertex of the regular $n$-simplex, and $\textbf{\textit{R}}_O \in \SO(n)$ is the geodesic (shortest) rotation\footnote{In practice, we compute it utilizing the Householder (double-) reflection method, \eg, as described by \citet{book/2013/golub_van_loan}.} from the sphere center $\textbf{c}_0$ to $\norm{\textbf{c}_0}\textbf{p}_1$.
Therefore, $\textbf{\textit{R}}_{T_1}=\textbf{I}_{n+2}$.
(Technically, if the center $\textbf{c}_0$ happens to be $-\textbf{p}_1$, $\textbf{\textit{R}}_O$ is a reflection about the origin.
In principle, we could just as well write $\textbf{\textit{R}}_O \in$ O$(n)$, since it makes no difference in our further derivations.)

We now need to prove that $\textbf{\textup{F}}_n(\, \cdot \,;\textbf{\textit{S}})$ is $\Og(n)$-equivariant.
\begin{proposition}
Let $\textbf{\textup{F}}_n(\, \cdot \,;\textbf{\textit{S}}): \mathbb{R}^{n+2} \rightarrow \mathbb{R}^{n+1}$ be the neuron defined in \eqref{eq:sphere_NDfilter_bank} and $\textbf{R} \in \Og(n)$ be an $n \times n$ rotation or reflection matrix.
Then the transformation representation in the filter output space $\mathbb{R}^{n+1}$ is given by the $(n+1) \times (n+1)$ matrix
\begin{equation}
	\label{eq:V_n}
    V_n = \rho \left(\textbf{\textit{R}}\right)= \textup{\textbf{M}}_n^\top \textbf{\textit{R}}_O \, \textbf{\textit{R}}\, \textbf{\textit{R}}_O^{\top} \textup{\textbf{M}}_n ~,
\end{equation}
where $\textup{\textbf{M}}_n \in \textup{O}(n+1)$ is the-change-of-basis matrix defined in \eqref{eq:nd_basis_matrix} and a $1$ is appended to the diagonals of $\textbf{R}_O$ and $\textbf{R}$ to make them $(n+1) \times (n+1)$.
Furthermore, $V_n \in G < \Og (n+1)$.
\vspace{-3pt}
\end{proposition}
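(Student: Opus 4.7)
The plan is to reduce the equivariance identity $V_n \textbf{\textup{F}}_n(\textbf{\textit{X}};\textbf{\textit{S}}) = \textbf{\textup{F}}_n(\textbf{\textit{R}}\textbf{\textit{X}};\textbf{\textit{S}})$ to a matrix equation and verify it in the natural basis of the simplex. Since this must hold for every embedded input, and $\textbf{\textit{R}}\textbf{\textit{X}}$ is genuinely the embedding of $\textbf{\textit{R}}\textbf{x}$ (the two trailing coordinates of \eqref{hypersphere_in_r} are fixed by the appended $1$'s on the diagonal of the extension, and $\lVert \textbf{\textit{R}}\textbf{x}\rVert = \lVert\textbf{x}\rVert$ for $\textbf{\textit{R}} \in \Og(n)$), equivariance collapses to the identity $V_n B_n(\textbf{\textit{S}}) = B_n(\textbf{\textit{S}})\,\textbf{\textit{R}}$ of $(n{+}1)\times(n{+}2)$ matrices.

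Next I would pass to the ``simplex frame''. Set $\tilde{\textbf{\textit{S}}}_i := \textbf{\textit{R}}_{T_i}\textbf{\textit{R}}_O\textbf{\textit{S}}$, so that the $i$-th row of $B_n(\textbf{\textit{S}})$ is $\tilde{\textbf{\textit{S}}}_i^\top \textbf{\textit{R}}_O$. By construction $\tilde{\textbf{\textit{S}}}_i = \bigl(\lVert\textbf{c}_0\rVert\textbf{p}_i^\top,\,\tfrac{1}{2}(\lVert\textbf{c}_0\rVert^2 - r^2),\,1\bigr)^\top$, so its last two coordinates are independent of $i$. With $\tilde{\textbf{\textit{R}}} := \textbf{\textit{R}}_O\textbf{\textit{R}}\textbf{\textit{R}}_O^\top$, the required equivariance for row $i$ rewrites as $\tilde{\textbf{\textit{R}}}^\top \tilde{\textbf{\textit{S}}}_i = \sum_j (V_n)_{ij}\,\tilde{\textbf{\textit{S}}}_j$, which---thanks to the coincidence of trailing coordinates---splits into a scalar condition $\sum_j (V_n)_{ij} = 1$ and an $\mathbb{R}^n$ vector condition $\sum_j (V_n)_{ij}\,\textbf{p}_j = \tilde{\textbf{\textit{R}}}^\top \textbf{p}_i$.

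The core computation is to evaluate $(V_n)_{ij} = \textbf{m}_i^\top \tilde{\textbf{\textit{R}}}\, \textbf{m}_j$ explicitly. Using \eqref{eq:nd_basis_matrix}, the normalization $p^2 = (n{+}1)/n$ (a byproduct of Proposition~\ref{pr:M_orthogonal}), and the fact that the extension of $\tilde{\textbf{\textit{R}}}$ fixes the last coordinate, one obtains $(V_n)_{ij} = \tfrac{1}{n+1}\bigl(1 + n\,\textbf{p}_i^\top \tilde{\textbf{\textit{R}}}\,\textbf{p}_j\bigr)$. Summing over $j$ and invoking $\sum_j \textbf{p}_j = \textbf{0}$ (equivalently $\textbf{\textup{P}}_n\textbf{1}=\textbf{0}$, which surfaces in the proof of Lemma~\ref{lem:lemma}) gives the scalar condition. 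The vector condition then reduces to the reproducing identity $\sum_j \textbf{p}_j\textbf{p}_j^\top = \tfrac{n+1}{n}\textbf{I}_n$---the top-left block of $\textbf{M}_n \textbf{M}_n^\top = \textbf{I}_{n+1}$ from Proposition~\ref{pr:M_orthogonal}, also implicit in Lemma~\ref{lem:lemma}---applied to $\tilde{\textbf{\textit{R}}}^\top \textbf{p}_i$, which closes the equivariance argument.

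Finally, $V_n \in \Og(n{+}1)$ as a product of orthogonal matrices (Proposition~\ref{pr:M_orthogonal}, together with the fact that the appended-$1$ extensions of $\textbf{\textit{R}}$ and $\textbf{\textit{R}}_O$ lie in $\Og(n{+}1)$). To place it in a proper subgroup, note that the $(n{+}1)$-extension of $\tilde{\textbf{\textit{R}}}$ fixes $\textbf{e}_{n+1} \in \mathbb{R}^{n+1}$, so $V_n$ fixes $\textbf{v} := \textbf{M}_n^\top \textbf{e}_{n+1} = \tfrac{1}{p\sqrt{n}}\textbf{1}$; hence $V_n$ lies in the stabilizer of $\textbf{v}$, which is isomorphic to $\Og(n)$ and is a proper subgroup of $\Og(n{+}1)$. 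I expect the main obstacle to be bookkeeping: the same symbol $\textbf{\textit{R}}$ denotes the $n$-, $(n{+}1)$-, and $(n{+}2)$-dimensional extensions, and the indices $i,j$ must be threaded through a transpose correctly so that the coefficients extracted from the reproducing property line up with $(V_n)_{ij}$ rather than $(V_n)_{ji}$.
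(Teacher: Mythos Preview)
Your proposal is correct, but it both (i) proves more than the paper does at this point and (ii) does so via a different computational route.

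In the paper, the proof of this Proposition is short and structural: it only argues that $V_n$ is orthogonal as a product of orthogonal factors, observes that appending a $1$ to the diagonal of $\textbf{\textit{R}}$ preserves the sign of the determinant, and concludes that the image is a proper subgroup $G<\Og(n{+}1)$ since the $(n{+}1)$-extensions of $\textbf{\textit{R}}$ and $\textbf{\textit{R}}_O$ cannot hit all of $\Og(n{+}1)$. The actual verification of the equivariance identity $V_n B_n(\textbf{\textit{S}})\,\textbf{\textit{X}} = B_n(\textbf{\textit{S}})\,\textbf{\textit{R}}\textbf{\textit{X}}$ is deferred to Theorem~\ref{th:the_theorem}, where it is carried out by a block-matrix computation: one writes $B_n(\textbf{\textit{S}})=\bigl[\lVert\textbf{c}_0\rVert\,\textbf{P}_n^\top\textbf{\textit{R}}_O\ \ s_{n+1}\textbf{1}\ \ s_{n+2}\textbf{1}\bigr]$ and then applies Lemma~\ref{lem:lemma} (the identity $\textbf{M}_n\textbf{P}_n^\top = p\bigl[\begin{smallmatrix}\textbf{I}_n\\ \textbf{0}^\top\end{smallmatrix}\bigr]$) to collapse $V_n B_n(\textbf{\textit{S}})$ directly.

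You instead fold the equivariance check into the Proposition and verify it entry-wise: computing $(V_n)_{ij}=\tfrac{1}{n+1}\bigl(1+n\,\textbf{p}_i^\top\tilde{\textbf{\textit{R}}}\,\textbf{p}_j\bigr)$, then checking the row-sum condition via $\sum_j\textbf{p}_j=\textbf{0}$ and the vector condition via $\textbf{P}_n\textbf{P}_n^\top=\tfrac{n+1}{n}\textbf{I}_n$. This is equivalent content---both facts are exactly the row- and block-orthogonality encoded in Proposition~\ref{pr:M_orthogonal} and Lemma~\ref{lem:lemma}---but your argument unpacks the coefficients rather than manipulating block matrices. Your stabilizer characterization of $G$ (namely that every $V_n$ fixes $\textbf{M}_n^\top\textbf{e}_{n+1}\propto\textbf{1}$) is also sharper than the paper's informal dimension count, and it is precisely the observation the paper later uses in the bias proposition.
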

\begin{proof}
Since $\textup{\textbf{M}}_n \in \Og(n+1)$, $\textbf{\textit{R}}_O \in \SO(n)$, and $\textbf{\textit{R}} \in \Og(n)$ are orthogonal matrices, $V_n$ in \eqref{eq:V_n} is an orthogonal change-of-basis transformation that represents $\textbf{\textit{R}} \in \Og(n)$ in the basis constructed by $\textbf{M}_n$ and $\textbf{\textit{R}}_O$.
Note that appending one to the diagonal of $\textbf{\textit{R}}\in \Og(n)$ does not affect the sign of the determinant, which makes $V_n$ a reflection representation if $\det{\textbf{\textit{R}}} = -1$, or a rotation representation if $\det{\textbf{\textit{R}}} = +1$.
Since $\textbf{\textit{R}} \in \Og(n)$ and $\textbf{\textit{R}}_O \in \Og(n)$ embedded in the $(n+1)$D (by extending the matrix diagonal with $1$), not all elements of $\Og(n+1)$ can be generated by the operation in \eqref{eq:V_n}.
Thus, we conclude that $V_n$ belongs to a proper subgroup of $\Og(n+1) = \Og(n) \times \mathcal{S}^n$, \ie, $G < \Og (n+1)$, where $G$ is formed as $\Og(n) \times \textbf{M}_n \left[\frac{\textbf{c}_0}{\norm{\textbf{c}_0}}~~0\right]$ with $\textbf{M}_n \left[\frac{\textbf{c}_0}{\norm{\textbf{c}_0}}~~0\right] \in \mathcal{S}^n$.

The original transformation is found directly as
\begin{equation}
	\label{eq:R_to_V}
	\textbf{\textit{R}} = \textbf{\textit{R}}_O^{\top} \textbf{M}_n\, V_n \, \textbf{M}_n^{\top} \textbf{\textit{R}}_O~,
\end{equation}
followed by the retrieval of the upper-left $n \times n$ sub-matrix.
\end{proof}
Noteworthy, the basis determined by $\textbf{\textit{R}}_O\in \SO(n)$, which depends on the center $\textbf{c}_0$ of the sphere $\textbf{\textit{S}} \in \mathbb{R}^{n+2}$ (see \eqref{eq:sphere_NDfilter_bank}), will be different for different $\textbf{c}_0$. 
Therefore, the representation $V_n$ will differ as well.

\begin{theorem}
\label{th:the_theorem}
The neuron $\textup{\textbf{F}}_n(\, \cdot \,;\textbf{\textit{S}}): \mathbb{R}^{n+2} \rightarrow \mathbb{R}^{n+1}$ defined in \eqref{eq:sphere_NDfilter_bank} is $\Og(n)$-equivariant.
\end{theorem}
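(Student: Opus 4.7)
The plan is to verify the equivariance relation $V_n\,B_n(\textbf{\textit{S}})\,\textbf{\textit{X}} = B_n(\textbf{\textit{S}})\,\textbf{\textit{R}}\,\textbf{\textit{X}}$ directly, by tracking how the $n+1$ rotated sphere copies encoded as the rows of $B_n(\textbf{\textit{S}})$ transform under $\textbf{\textit{R}}$.

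First, I would recast the $i$-th component of $B_n(\textbf{\textit{S}})\textbf{\textit{X}}$ in a transparent form. Setting $\textbf{\textit{Y}} = \textbf{\textit{R}}_O\textbf{\textit{X}}$ and $\tilde{\textbf{\textit{S}}}_i = \textbf{\textit{R}}_{T_i}\textbf{\textit{R}}_O\textbf{\textit{S}}$---the copy of $\textbf{\textit{S}}$ whose center is rotated to $\lVert\textbf{c}_0\rVert\,\textbf{p}_i$ while the last two embedding entries from \eqref{hypersphere_in_r} are left untouched---one obtains $[B_n(\textbf{\textit{S}})\textbf{\textit{X}}]_i = \tilde{\textbf{\textit{S}}}_i^\top \textbf{\textit{Y}}$. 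Introducing $\textbf{\textit{R}}' = \textbf{\textit{R}}_O\textbf{\textit{R}}\textbf{\textit{R}}_O^\top$ (extended to the embedding space by trailing $1$'s on the diagonal) yields the analogous $[B_n(\textbf{\textit{S}})\textbf{\textit{R}}\textbf{\textit{X}}]_i = (\textbf{\textit{R}}'^\top \tilde{\textbf{\textit{S}}}_i)^\top \textbf{\textit{Y}}$.

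The crux of the proof is then the identity $\textbf{\textit{R}}'^\top \textbf{p}_i = \sum_{j=1}^{n+1} [V_n]_{ij}\,\textbf{p}_j$. I would derive it by unfolding $V_n^\top = \textbf{M}_n^\top \textbf{\textit{R}}'^\top \textbf{M}_n$ via the explicit columns $\textbf{m}_j \propto [\textbf{p}_j^\top,\, n^{-1/2}]^\top$ from \eqref{eq:nd_basis_matrix}: since $\textbf{\textit{R}}'^\top$ with its appended $1$ fixes the trailing coordinate, reading off the first $n$ rows of $\textbf{\textit{R}}'^\top \textbf{m}_i = \sum_j [V_n]_{ij}\,\textbf{m}_j$ gives the claim. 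A useful byproduct is that each row of $V_n$ sums to $1$, which also follows from $\textbf{M}_n\textbf{1}$ being a multiple of $\textbf{e}_{n+1}$ (a direct consequence of Lemma~\ref{lem:lemma}) and thus fixed by $\textbf{\textit{R}}'$.

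Finally, I would lift the identity from the simplex vertices $\textbf{p}_i \in \mathbb{R}^n$ to the full embedded spheres $\tilde{\textbf{\textit{S}}}_i \in \mathbb{R}^{n+2}$. Since all $\tilde{\textbf{\textit{S}}}_i$ share the same last two entries (determined only by $\lVert\textbf{c}_0\rVert$ and $r$) and the row sums of $V_n$ equal $1$, it follows that $\textbf{\textit{R}}'^\top \tilde{\textbf{\textit{S}}}_i = \sum_j [V_n]_{ij}\,\tilde{\textbf{\textit{S}}}_j$; taking inner products with $\textbf{\textit{Y}}$ then yields $[B_n(\textbf{\textit{S}})\textbf{\textit{R}}\textbf{\textit{X}}]_i = [V_n\,B_n(\textbf{\textit{S}})\textbf{\textit{X}}]_i$, the desired equivariance. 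The main obstacle is precisely this last lift: extending the identity from the $n$-dimensional vertices $\textbf{p}_i$ to the $(n+2)$-dimensional sphere vectors $\tilde{\textbf{\textit{S}}}_i$ hinges on the row-sum property of $V_n$, which is the subtle geometric content of appending the common constant $n^{-1/2}$ to each simplex vertex when constructing $\textbf{M}_n$.
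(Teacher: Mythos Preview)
Your proposal is correct and takes essentially the same approach as the paper: both arguments hinge on the relation $\textbf{\textit{R}}_{T_i}\textbf{\textit{R}}_O\textbf{c}_0 = \lVert\textbf{c}_0\rVert\,\textbf{p}_i$, the $\Og(n)$-invariance of the last two embedding components, and the orthogonality of $\textbf{M}_n$ (via Lemma~\ref{lem:lemma}). The only difference is packaging---the paper works directly with block matrices (using $\textbf{M}_n\textbf{P}_n^\top = p[\textbf{I}_n;\textbf{0}^\top]$ and $\textbf{M}_n\textbf{1} = [\textbf{0};p\sqrt{n}]$), whereas you work componentwise and make the row-sum property of $V_n$ explicit; these are the same identities read in two orientations.
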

\begin{proof}
To prove the theorem, we need to show that \eqref{eq:equivariance} holds for $\textup{\textbf{F}}_n(\, \cdot \,;\textbf{\textit{S}})$.

 We substitute \eqref{eq:V_n} to the LHS and \eqref{eq:sphere_NDfilter_bank} to the RHS, and obtain
\begin{equation}
    \label{eq:nd_filter_equivariance}
     V_n \, B_n(\textbf{\textit{S}}) \, \textbf{\textit{X}} = B_n(\textbf{\textit{S}})\,\textbf{\textit{R}}\textit{\textbf{X}}~.
\end{equation}

For the complete proof, please see the Appendix (refer to Section~\ref{sec:A_proofs}).
\end{proof}
We note that with respect to the input vector $\textbf{x}\in\mathbb{R}^n$, the equivariant hypersphere $\textbf{\textup{F}}_n(\, \cdot \,;\textbf{\textit{S}}): \mathbb{R}^{n+2} \rightarrow \mathbb{R}^{n+1}$ represents a non-linear $\Og(n)$-equivariant function.
It is also worth mentioning that the \textit{sum} of the output $\textbf{\textit{Y}} = B_n(\textbf{\textit{S}}) \, \textbf{\textit{X}}$ is an $\Og(n)$-invariant scalar, \ie, the DC-component, due to the regular $n$-simplex construction.

This invariant part can be adjusted by adding a scalar \textit{bias} parameter to the output $\textbf{\textit{Y}}$.
The concept of bias is imperative for linear classifiers, but for spherical decision surfaces \citep{perwass2003spherical}, it is implicitly modeled by the embedding \eqref{hypersphere_in_r}. 
We note, however, that adding a scalar bias parameter, $b \in \mathbb{R}$ to the output of an equivariant hypersphere \eqref{eq:sphere_NDfilter_bank} respects $\Og(n)$-equivariance:
\begin{proposition}
Let $\textbf{Y}\in\mathbb{R}^{n+1}$ be the output of the $\Og(n)$-equivariant hypersphere $\textup{\textbf{F}}_n(\, \cdot \,;\textbf{\textit{S}}): \mathbb{R}^{n+2} \rightarrow \mathbb{R}^{n+1}$ \eqref{eq:sphere_NDfilter_bank} given the input $\textbf{X}\in \mathbb{R}^{n+2}$, and $b \in \mathbb{R}$ be a bias parameter.
Then $\textbf{Y}' = \textbf{Y} + b\,\textbf{\textup{1}}$, where $\textbf{\textup{1}}$ is the vector of ones in $\mathbb{R}^{n+1}$, is also $\Og(n)$-equivariant.
\end{proposition}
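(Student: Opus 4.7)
The plan is to reduce the claim, via linearity of $V_n$, to showing the single identity $V_n\,\textbf{1} = \textbf{1}$, where $\textbf{1} \in \mathbb{R}^{n+1}$. Indeed, if $\textbf{Y} = B_n(\textbf{\textit{S}})\,\textbf{\textit{X}}$ and the input is transformed to $\textbf{\textit{R}}\textbf{\textit{X}}$, then by Theorem~\ref{th:the_theorem} the new output of the biased neuron is $B_n(\textbf{\textit{S}})\,\textbf{\textit{R}}\textbf{\textit{X}} + b\,\textbf{1} = V_n\,B_n(\textbf{\textit{S}})\,\textbf{\textit{X}} + b\,\textbf{1}$, while the action of $V_n$ on the biased output gives $V_n\,(\textbf{Y} + b\,\textbf{1}) = V_n\,B_n(\textbf{\textit{S}})\,\textbf{\textit{X}} + b\,V_n\,\textbf{1}$. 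Equating these two, the desired equivariance follows exactly when $V_n\,\textbf{1} = \textbf{1}$.

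To establish that identity, I would first compute $\textup{\textbf{M}}_n\,\textbf{1}$ directly from the definition \eqref{eq:nd_basis_matrix}. Using \eqref{eq:simplex}, the sum of the regular $n$-simplex vertices satisfies $\sum_{i=1}^{n+1}\textbf{p}_i = \textbf{0}$ (a brief check: $\textbf{p}_1 + \sum_{i=2}^{n+1}(\kappa\textbf{1} + \mu\textbf{e}_{i-1}) = (n^{-1/2} + n\kappa + \mu)\,\textbf{1} = \textbf{0}$ given the definitions of $\kappa,\mu$). Hence only the last, constant row of $\textup{\textbf{M}}_n$ survives summation, giving $\textup{\textbf{M}}_n\,\textbf{1} = \sqrt{n+1}\,\textbf{e}_{n+1}$, where $\textbf{e}_{n+1}$ is the last standard basis vector in $\mathbb{R}^{n+1}$ (the scaling checks out since $\textup{\textbf{M}}_n$ is orthogonal by Proposition~\ref{pr:M_orthogonal} and hence norm-preserving).

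Next, recall that in the expression $V_n = \textup{\textbf{M}}_n^\top \textbf{\textit{R}}_O\,\textbf{\textit{R}}\,\textbf{\textit{R}}_O^\top\,\textup{\textbf{M}}_n$, both $\textbf{\textit{R}}_O$ and $\textbf{\textit{R}}$ are embedded into $(n+1)\times(n+1)$ matrices by appending a $1$ on the diagonal, so each fixes $\textbf{e}_{n+1}$. Therefore $\textbf{\textit{R}}_O\,\textbf{\textit{R}}\,\textbf{\textit{R}}_O^\top \cdot \sqrt{n+1}\,\textbf{e}_{n+1} = \sqrt{n+1}\,\textbf{e}_{n+1}$. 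Finally, applying $\textup{\textbf{M}}_n^\top$ to $\textbf{e}_{n+1}$ returns the last row of $\textup{\textbf{M}}_n$, which by construction equals $\tfrac{1}{\sqrt{n+1}}\,\textbf{1}^\top$, yielding $V_n\,\textbf{1} = \sqrt{n+1}\cdot\tfrac{1}{\sqrt{n+1}}\,\textbf{1} = \textbf{1}$, as required.

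The only nontrivial step is the geometric observation that the all-ones vector in the filter output space is precisely the invariant direction singled out by $\textup{\textbf{M}}_n$ (namely $\textbf{e}_{n+1}$ after change of basis), which is what makes the invariant DC-component of the output well-defined; once this is in place, the fact that both $\textbf{\textit{R}}_O$ and $\textbf{\textit{R}}$ act trivially on that extra coordinate does the rest. No separate case analysis for rotations versus reflections is needed, so the statement is unified across $\Og(n)$.
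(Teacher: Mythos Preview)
Your proof is correct and follows essentially the same route as the paper: both reduce the claim to the identity $V_n\,\textbf{1} = \textbf{1}$, establish it by computing $\textup{\textbf{M}}_n\,\textbf{1}$ (using that the simplex vertices sum to zero) to get a multiple of $\textbf{e}_{n+1}$, observe that the embedded $\textbf{\textit{R}}_O$ and $\textbf{\textit{R}}$ fix this vector, and then apply $\textup{\textbf{M}}_n^\top$ to recover $\textbf{1}$. The only cosmetic difference is that the paper writes the scalar as $p\sqrt{n}$ whereas you simplify it to $\sqrt{n+1}$ (which are equal since $p = \sqrt{(n+1)/n}$).
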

\begin{proof}
We need to show that \eqref{eq:nd_filter_equivariance} also holds when the bias $b$ is added.
First, we use $V_n$---the representation of $\textbf{\textit{R}}\in \Og(n)$ from \eqref{eq:V_n}---and the fact that $\textbf{\textit{R}}$ and $\textbf{\textit{R}}_O$ are both appended 1 to their main diagonal to make them $(n+1) \times (n+1)$.
Then $V_n \, \textbf{1} = \textup{\textbf{M}}_n^\top \textbf{\textit{R}}_O \, \textbf{\textit{R}}\, \textbf{\textit{R}}_O^{\top} \textup{\textbf{M}}_n \textbf{1} = \textup{\textbf{M}}_n^\top \textbf{\textit{R}}_O \, \textbf{\textit{R}}\, \textbf{\textit{R}}_O^{\top} \begin{bmatrix}\textbf{0}\\ p\,\sqrt{n} \end{bmatrix}= \textup{\textbf{M}}_n^\top \begin{bmatrix}\textbf{0}\\ p\,\sqrt{n} \end{bmatrix} = \textbf{1}$, where $p$ is a scalar defined in \eqref{eq:simplex}.
Since the bias $b$ is a scalar, we use that $V_n \, b\textbf{1} = b V_n \, \textbf{1} = b \textbf{1}$.
We now consider the left-hand side of \eqref{eq:nd_filter_equivariance}:
$V_n \, \textbf{\textit{Y}}' = V_n \, (\textbf{\textit{Y}} + b\textbf{1}) = V_n \, B_n(\textbf{\textit{S}}) \, \textbf{\textit{X}} + V_n \, b\textbf{1} = V_n \, B_n(\textbf{\textit{S}}) \, \textbf{\textit{X}} + b\textbf{1}$.
Plugging the equality \eqref{eq:nd_filter_equivariance} into the last equation, we complete the proof: $V_n \, B_n(\textbf{\textit{S}}) \, \textbf{\textit{X}} + b\textbf{1} = B_n(\textbf{\textit{S}})\,\textbf{\textit{R}} \textit{\textbf{X}} + b\textbf{1}$.
\end{proof}
This result allows us to increase the capacity of the equivariant hypersphere by adding the learnable parameter $b \in \mathbb{R}$.
In addition, note that all $V_n \in G < \Og (n+1)$ can be characterized by the fact that they all have an eigenvector equal to $\frac{1}{\sqrt{n+1}} \textbf{\textup{1}}$, where $\textbf{\textup{1}}$ is the vector of ones in $\mathbb{R}^{n+1}$.

\subsection{Normalization and additional non-linearity}
\label{sec:normalization}
An important practical consideration in deep learning is feature normalization \citep{ioffe2015batch, ba2016layer}.
We show how the activations of the equivariant hypersphere \eqref{eq:sphere_NDfilter_bank} can be normalized maintaining the equivariance:
\begin{proposition}
\label{prop:normalization}
Let $\textbf{Y}\in\mathbb{R}^{n+1}$ be the $\Og(n)$-equivariant output of the hypersphere filter \eqref{eq:sphere_NDfilter_bank}.
Then $\textbf{Y} / \lVert \textbf{Y} \rVert$, where $\lVert \textbf{Y} \rVert \in \mathbb{R}$, is also $\Og(n)$-equivariant.
\end{proposition}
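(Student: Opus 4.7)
The proof plan is essentially a one-liner leveraging the orthogonality of the group representation $V_n$ already established in the earlier proposition.

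First, I would recall the two ingredients already at hand: (i) the equivariance relation $V_n\,\textbf{Y} = B_n(\textbf{\textit{S}})\,\textbf{\textit{R}}\,\textbf{\textit{X}}$ from \eqref{eq:nd_filter_equivariance}, i.e.\ transforming the input $\textbf{x}$ by $\textbf{\textit{R}}\in\Og(n)$ multiplies the output $\textbf{Y}$ on the left by $V_n$; and (ii) the fact that $V_n \in G < \Og(n+1)$, so $V_n$ is orthogonal and hence norm-preserving: $\lVert V_n\,\textbf{Y}\rVert = \lVert \textbf{Y}\rVert$ for every $\textbf{Y}\in\mathbb{R}^{n+1}$.

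Next, I would let $\textbf{Y}' := \textbf{Y}/\lVert\textbf{Y}\rVert$ and check the definition of equivariance \eqref{eq:equivariance} for the map $\textbf{\textit{X}}\mapsto \textbf{Y}'$. Applying $\textbf{\textit{R}}\in\Og(n)$ to the input gives, by (i) and (ii),
\begin{equation*}
\frac{B_n(\textbf{\textit{S}})\,\textbf{\textit{R}}\,\textbf{\textit{X}}}{\lVert B_n(\textbf{\textit{S}})\,\textbf{\textit{R}}\,\textbf{\textit{X}}\rVert}
= \frac{V_n\,\textbf{Y}}{\lVert V_n\,\textbf{Y}\rVert}
= \frac{V_n\,\textbf{Y}}{\lVert \textbf{Y}\rVert}
= V_n\,\frac{\textbf{Y}}{\lVert\textbf{Y}\rVert}
= V_n\,\textbf{Y}',
\end{equation*}
which is exactly the equivariance condition with the same representation $V_n$ as for $\textbf{Y}$ itself. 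Hence the normalized output is $\Og(n)$-equivariant.

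There is no real obstacle here; the only mild caveat is the edge case $\textbf{Y} = \textbf{0}$, where the normalization is undefined. I would briefly note that this case is negligible in practice (and $\Og(n)$-invariant, since $V_n\,\textbf{0} = \textbf{0}$), so it does not affect equivariance on its domain of definition. The same argument shows, more generally, that any scalar function of $\textbf{Y}$ that is $\Og(n)$-invariant (such as $\lVert\textbf{Y}\rVert$ itself, or any function thereof) can be used as a denominator or as a scale factor without breaking equivariance, which opens the door to introducing a non-linear radial gating on top of the normalized output.
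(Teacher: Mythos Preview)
Your proposal is correct and follows essentially the same approach as the paper: the paper's proof simply states that $\lVert \textbf{Y}\rVert$ is $\Og(n)$-invariant because ``the length remains unchanged under $\Og(n)$-transformations,'' and concludes immediately. You make explicit the underlying reason (orthogonality of $V_n$) and write out the one-line chain of equalities, but the argument is the same.
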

\begin{proof}
Since $\textbf{\textit{Y}}$ is $\Og(n)$-equivariant, $\lVert \textbf{\textit{Y}} \rVert$ is $\Og(n)$-invariant (the length remains unchanged under $\Og(n)$-transformations).
Hence, $\textbf{\textit{Y}} / \lVert \textbf{\textit{Y}} \rVert$ is also $\Og(n)$-equivariant.  
\end{proof}

To increase the descriptive power of the proposed approach, we can add non-linearity to the normalization step, following \citet{ruhe2023clifford}:
\begin{equation}
    \label{eq:non-linear_norm}
    \textbf{\textit{Y}} \mapsto \frac{\textbf{\textit{Y}}}{\sigma(a)\, (\lVert\textbf{\textit{Y}}\rVert - 1) + 1},
\end{equation}
where $a \in \mathbb{R}$ is a learnable scalar and $\sigma(\cdot)$ is the sigmoid function.
\subsection{Extracting deep equivariant features}
\label{sec:deep_propagation}
We might want to propagate the equivariant output of $\textbf{F}_n$ \eqref{eq:sphere_NDfilter_bank}, $\textbf{\textit{Y}} = B_n(\textbf{\textit{S}}) \, \textbf{\textit{X}}$, through spherical decision surfaces while maintaining the equivariance properties. 
One way to achieve it is by using $(n+1)$D spheres, \ie, $\textbf{F}_{n+1}$, since the output $\textbf{\textit{Y}}\in\mathbb{R}^{n+1}$.
Thus, the results established in the previous section not only allow us to use the equivariant hyperspheres \eqref{eq:sphere_NDfilter_bank} for $n$D inputs but also to cascade them in multiple layers, thus propagating equivariant representations by successively incrementing the feature space dimensionality with a unit step, \ie, $n\textup{D} \rightarrow (n+1)\textup{D}$.

Consider, for example, the point cloud patch $\mathcal{X} = \{ \textbf{x}_i \} ^N_{i=1}$ consisting of the coordinates of $N$ points $\textbf{x}\in \mathbb{R}^n$ as the input signal, which we can also consider as the $N \times n$ matrix $\textbf{X}$.
Given the equivariant neuron $\textbf{F}_n(\, \cdot \,; \textbf{\textit{S}})$, a  \textit{cascaded} $n\textup{D} \rightarrow (n+1)\textup{D}$ feature extraction procedure using equivariant hyperspheres $\textbf{F}_n(\, \cdot \,; \textbf{\textit{S}})$ for the given output dimensionality $d$ (with $d > n$) can be defined as follows (at the first step, $\textbf{\textit{X}} \gets \textbf{x}$):
\begin{equation}
\label{eq:algorithm}
\begin{aligned}
 &\textbf{\textit{X}} \in\mathbb{R}^n \rightarrow \texttt{embed}(\texttt{normalize}(\textbf{\textit{X}})) \rightarrow \textbf{F}_n(\textbf{\textit{X}}; \textbf{\textit{S}}) \rightarrow \\ &\texttt{embed}(\texttt{normalize}(\textbf{\textit{X}}+b)) \rightarrow  \textbf{F}_{n+1}(\textbf{\textit{X}}; \textbf{\textit{S}}) \rightarrow \\ &\ldots \rightarrow \textbf{F}_{d}(\textbf{\textit{X}}; \textbf{\textit{S}}) \rightarrow \texttt{normalize}(\textbf{\textit{X}}+b) \rightarrow \textbf{\textit{X}} \in \mathbb{R}^{d}~, 
\end{aligned}
\end{equation}
where $\texttt{embed}$ is the embedding according to \eqref{hypersphere_in_r}, $\texttt{normalize}$ is the optional activation normalization (see Proposition~\ref{prop:normalization}), and $b$ is an optional scalar bias.
\begin{proposition}
\label{prop:cascading}
Given that all operations involved in the procedure~\ref{eq:algorithm} are $\Og(n)$-equivariant, its output will also be  $\Og(n)$-equivariant.
\end{proposition}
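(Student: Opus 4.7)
The plan is to proceed by a straightforward induction on the number of stages in the cascade \eqref{eq:algorithm}, using the standard fact that the composition of $\Og(n)$-equivariant maps is again $\Og(n)$-equivariant (possibly with different output representations $\rho$ at each stage). Concretely, if $f : \mathcal{X} \to \mathcal{Y}$ satisfies $\rho_1(\textbf{\textit{R}}) f(\textbf{x}) = f(\textbf{\textit{R}}\textbf{x})$ and $g : \mathcal{Y} \to \mathcal{Z}$ satisfies $\rho_2(\rho_1(\textbf{\textit{R}})) g(\textbf{y}) = g(\rho_1(\textbf{\textit{R}}) \textbf{y})$, then $(g \circ f)$ is equivariant with representation $\rho_2 \circ \rho_1$. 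So the task reduces to exhibiting, for each atomic operation appearing in \eqref{eq:algorithm}, an explicit equivariant representation in its output space.

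Next I would walk through each atomic block of the cascade and invoke the corresponding earlier result. For the block $\textbf{F}_k(\,\cdot\,;\textbf{\textit{S}})$, equivariance with representation $V_k$ follows from Theorem~\ref{th:the_theorem}. For the bias addition $\textbf{\textit{X}} \mapsto \textbf{\textit{X}} + b$, equivariance is guaranteed by the bias proposition, which shows that $V_k$ fixes the all-ones vector. For $\texttt{normalize}$, equivariance follows from Proposition~\ref{prop:normalization} since dividing by the $\Og$-invariant norm preserves the representation (and the non-linear variant \eqref{eq:non-linear_norm} works the same way, because the scalar denominator depends only on $\lVert \textbf{\textit{Y}} \rVert$). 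The only block that needs a brief self-contained check is $\texttt{embed}$: for $\textbf{x}\in\mathbb{R}^n$ and $\textbf{\textit{R}}\in\Og(n)$, the embedding \eqref{hypersphere_in_r} satisfies $\texttt{embed}(\textbf{\textit{R}}\textbf{x}) = \widetilde{\textbf{\textit{R}}}\,\texttt{embed}(\textbf{x})$, where $\widetilde{\textbf{\textit{R}}}$ is $\textbf{\textit{R}}$ with two $1$'s appended on the diagonal; this is immediate because the last two embedding coordinates are $-1$ and $-\tfrac{1}{2}\lVert \textbf{x}\rVert^2$, both invariant under $\textbf{\textit{R}}$ by orthogonality, and the first $n$ coordinates transform by $\textbf{\textit{R}}$.

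Finally, I would glue these pieces together. Starting from an input $\textbf{x}\in\mathbb{R}^n$ transforming under $\rho_0(\textbf{\textit{R}}) = \textbf{\textit{R}} \in \Og(n)$, the output representation after the $k$-th cascaded stage is obtained by repeatedly applying the blocks above: $\rho_0 \to \widetilde{\rho}_0$ (via $\texttt{embed}$) $\to \widetilde{\rho}_0$ (via $\texttt{normalize}$ and $+b$) $\to V_n$ (via $\textbf{F}_n$) $\to \widetilde{V}_n$ (via the next $\texttt{embed}$), and so on, one dimension at a time, until the final $\textbf{F}_d$ produces a representation $V_d \in G < \Og(d+1)$ in the output space $\mathbb{R}^{d+1}$ followed by a last $\texttt{normalize}+b$ which preserves $V_d$. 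By induction, the full composition is therefore $\Og(n)$-equivariant with representation given by this chain.

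The only subtle point, and the one to treat with some care, is the book-keeping of representations across the $k\textup{D} \to (k{+}1)\textup{D}$ transitions: the representation acting on the output of $\textbf{F}_k$ is $V_k$, not the ambient $\Og(k+1)$, and the subsequent $\texttt{embed}$ step must be re-interpreted as acting on vectors whose transformation law is $V_k$ rather than a generic orthogonal matrix. This is harmless because $V_k$ itself lies in $\Og(k+1)$ (Theorem~\ref{th:the_theorem}), so the equivariance of $\texttt{embed}$ (and of all downstream blocks) applies verbatim with $V_k$ in place of the original $\textbf{\textit{R}}$; all that changes is which element of the ambient orthogonal group plays the role of the transformation, while the equivariance relation itself is preserved at every stage.
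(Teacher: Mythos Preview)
Your proposal is correct and follows essentially the same approach as the paper: both arguments reduce to the standard fact that a composition of equivariant maps is equivariant, proved by chaining the intertwining relations $\boldsymbol{\Phi}_i(\rho_i(\textbf{\textit{R}})\,\cdot\,) = \rho_{i+1}(\textbf{\textit{R}})\,\boldsymbol{\Phi}_i(\,\cdot\,)$ through the cascade. The paper's proof is terser---it simply invokes the hypothesis that each operation is equivariant and writes out the composition---whereas you go further and explicitly verify equivariance of each atomic block (\texttt{embed}, bias, \texttt{normalize}, $\textbf{F}_k$) by citing the relevant earlier results; this extra work is not strictly required given the proposition's hypothesis, but it is correct and your discussion of the representation book-keeping across the $k\textup{D}\to(k{+}1)\textup{D}$ transitions (noting that $V_k\in\Og(k+1)$ so the downstream equivariance statements apply verbatim) is a nice clarification that the paper leaves implicit.
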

The proof is given in the Appendix (Section~\ref{sec:A_proofs}).

Thus, given $\mathcal{X}$ as input, the point-wise cascaded application with depth $d$ \eqref{eq:algorithm} produces the equivariant features $\mathcal{Y} = \{ \textit{\textbf{Y}}_i \} ^N_{i=1}$, $\textit{\textbf{Y}} \in \mathbb{R}^{n+d}$, which we can consider as the $N \times (n+d)$ matrix $\textbf{Y}$.

In this case, we considered the width of each layer in \eqref{eq:algorithm} to be 1, \ie, one equivariant hypersphere.
In practice and depending on the task, we often use $K_l$ equivariant hyperspheres per layer $l$, with suitable connectivity between subsequent layers.

\subsection{Modelling higher-order interactions}
\label{sec:higher_order}
\begin{figure*}
    \centering
    \includegraphics[width=\linewidth]{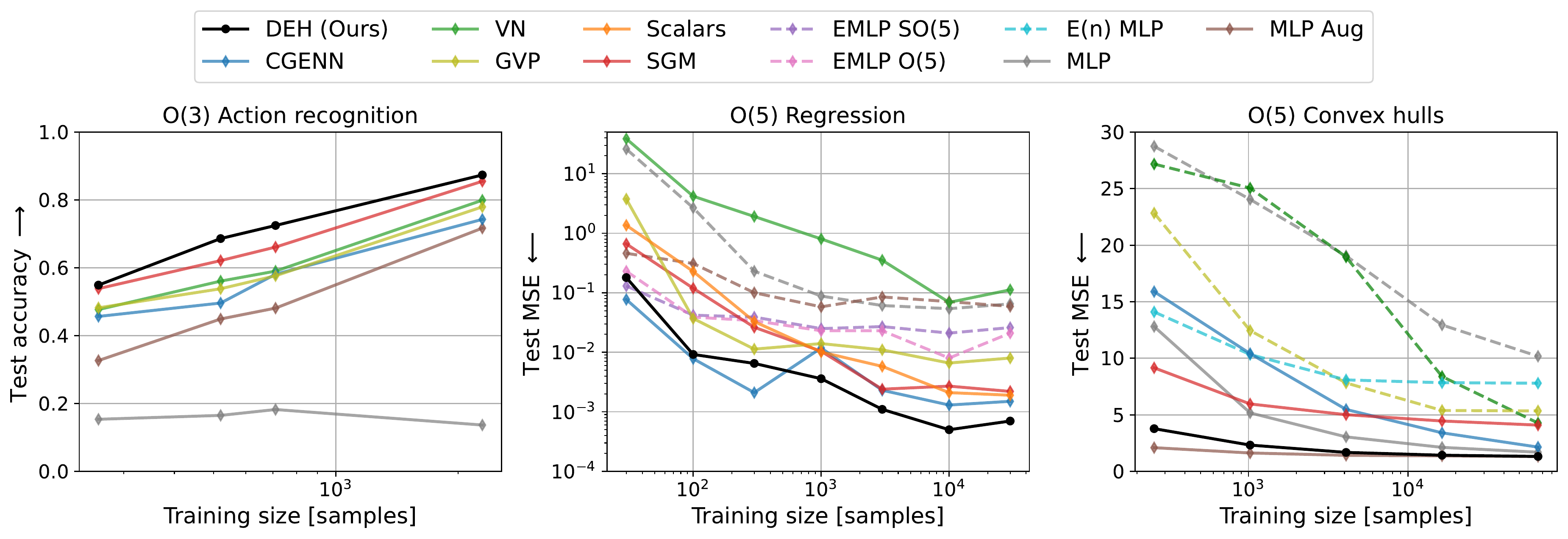}
    \caption{Left: real data experiment (the higher the accuracy the better); all the presented models are also permutation-invariant. Center and right: synthetic data experiments (the lower the mean squared error (MSE) the better); dotted lines mean that the results of the methods are copied from \citet{finzi2021practical} ($\Og(5)$ regression) or \citet{ruhe2023clifford} ($\Og(5)$ convex hulls). Best viewed in color.}
    \label{fig:O5-experiments}
\end{figure*}
The theoretical framework established above considers the interaction of \textit{one} point and one spherical decision surface, copied to construct the regular $n$-simplex constellation for the equivariant neuron in \eqref{eq:sphere_NDfilter_bank}.
To increase the expressiveness of a model comprised of equivariant hyperspheres, we propose to consider the relation of \textit{two} points and a sphere. 

Namely, following the work of \citet{li2001generalized}\footnote{See p. 22 in \citet{li2001generalized}.}, the relation between two points, $\textbf{x}_{1}$, $\textbf{x}_{2}$, and a sphere in $\mathbb{R}^{n}$, all embedded in $\mathbb{R}^{n+2}$ according to \eqref{hypersphere_in_r} as $\textbf{\textit{X}}_1$, $\textbf{\textit{X}}_2$, and $\textbf{\textit{S}}$, respectively, is formulated as
\begin{equation}
\label{eq:delta}
    \delta = \textit{e}_{12} \; \textbf{\textit{X}}_1^{\top} \textbf{\textit{S}} \; \textbf{\textit{X}}_2^{\top} \textbf{\textit{S}},
\end{equation}
where $\textit{e}_{12} := -\frac{1}{2} \, ({\lVert \textbf{x}_{1} - \textbf{x}_{2}\rVert}^2) \,\, \in \mathbb{R}$ models the edge as the squared Euclidean distance between the points.

To classify the relative position of the points to the sphere, we use the sign of $\delta \in \mathbb{R}$, and note that it is only determined by the respective sphere (scalar) activations, \ie, the scalar products $\textbf{\textit{X}}_i^\top \textbf{\textit{S}}$, since the edges $\textit{e}_{ij}$ are always negative. 
Thus, we may omit them, as we further demonstrate by the ablations in the Appendix (see Section~\ref{sec:A_ablations}).
Also, we note that in order to make $\delta$ an invariant quantity, we need to have \textit{equivariant} activations. 
Since the single sphere activations are not equivariant (see Section~\ref{sec:spherical_neurons}), we propose to substitute the single sphere $\textbf{\textit{S}}$ with our equivariant hyperspheres $B_n(\textbf{\textit{S}})$ \eqref{eq:sphere_NDfilter_bank}. 

Given the input $\textbf{X} \in \mathbb{R}^{N\times n}$ and the corresponding extracted equivariant features $\textbf{Y} \in \mathbb{R}^{N\times (n+d)}$, we compute 
\begin{equation}
\label{eq:Delta}
    \Delta = \textbf{X}^\top B_n(\textbf{\textit{S}}) \; (\textbf{X}^\top B_n(\textbf{\textit{S}}))^\top = \textbf{Y} \, \textbf{Y}^{\top}.
\end{equation}
The $\Og(n)$-invariance of $\Delta \in \mathbb{R}^{N \times N}$ follows from the fact that it is comprised of the 
Gram matrix $\textbf{Y} \, \textbf{Y}^{\top}$ that consists of the pair-wise inner products of equivariant features, which are invariant \citep{deng2021vector,melnyk2022tetrasphere}, just as in the case of directly computing the auto-product of the points \citep{xu2021sgmnet}. 
When permutation-invariance is desired, we achieve it by aggregation over the points, first following the procedure by \citet{xu2021sgmnet} and sorting the rows/columns of $\Delta$, and then applying max and/or mean pooling over $N$.
If multiple ($K_l$) equivariant hyperspheres per layer are used, \eqref{eq:Delta} is computed independently for all $K_l$, by computing $K_l$ Gram matrices, resulting in $\Delta \in \mathbb{R}^{N \times N \times K_l}$.

We show the effectiveness of the proposed invariant operator \eqref{eq:Delta} by the corresponding ablation in the Appendix (Section~\ref{sec:A_ablations}).
\vspace{-8pt}

\section{Experimental validation}
\label{sec:demonstration}
\vspace{-4pt}
In this section, we experimentally verify our theoretical results derived in Section~\ref{sec:the_meat} by evaluating our \texttt{Deep Equivariant Hyperspheres}, constituting feed-forward point-wise architectures, on real and synthetic $\Og(n)$-equivariant benchmarks.
In each experiment, we train the models using the same hyperparameters and present the test-set performance of the models chosen based on their validation-set performance.

For the sake of a fair comparison, all the models have approximately the same number of learnable parameters, and their final fully-connected layer part is the same.
A more detailed description of the used architectures is presented in the Appendix (see Section~\ref{sec:A_architectures}).
In addition to the performance comparison in Figure~\ref{fig:O5-experiments}, we compare the time complexity (\ie, the inference speed) of the considered methods\footnote{Some of the results in Figure~\ref{fig:O5-experiments} are copied from \citet{ruhe2023clifford} since the implementation of the specific versions of some models is currently unavailable. Therefore, we could not measure their inference speed.} in Figure~\ref{fig:trade-off}.
Furthermore, we present various ablations in the Appendix (see Section~\ref{sec:A_ablations}).
All the models are implemented in PyTorch~\citep{paszke2019pytorch}.

\subsection{$\Og(3)$ Action recognition}
\label{sec:o3-classification}
First, we test the ability of our method to utilize O(3)-equivariance as the inductive bias. 
For this experiment, we select the task of classifying the 3D skeleton data, presented and extracted by \citet{melnyk2022steerable} from the UTKinect-Action3D dataset by \citet{xia2012view}.
Each skeleton is a $20 \times 3$ point cloud, belonging to one of the $10$ action categories; refer to the work of \citet{melnyk2022steerable} for details.
We formulate the task to be both permutation- and O(3)-invariant.
\begin{figure*}
    \centering
    \includegraphics[width=\linewidth]{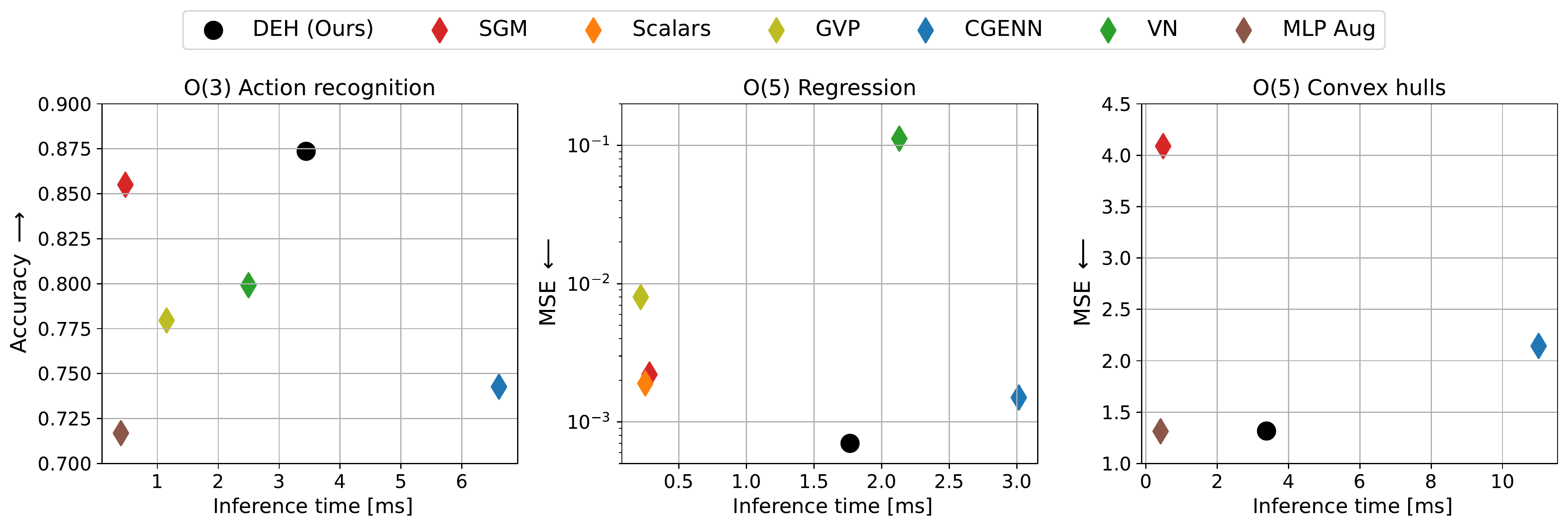}
    \caption{Speed/performance trade-off (the models are trained on all the available training data). Note that the desired trade-off is toward the top-left corner (higher accuracy and faster inference) in the left figure, and toward the bottom-left corner (lower error and faster inference) in the center and right figures. To measure inference time, we used an NVIDIA A100. Best viewed in color.}
    \label{fig:trade-off}
\end{figure*}

We construct an $\Og(3)$-equivariant point-wise feedforward model using layers with our equivariant hyperspheres (according to the blueprint of \eqref{eq:algorithm}) with the two-point interaction described in Section~\ref{sec:higher_order}, which we call DEH (see the illustration in Figure~\ref{fig:deh_architecture}). 
We also build a variant of the invariant SGM descriptor \citep{xu2021sgmnet} computing the Gram matrix of the input points, point-wise equivariant VN \citep{deng2021vector}, GVP \citep{jing2021learning}, and CGENN \citep{ruhe2023clifford} models and, as non-equivariant baselines, point-wise MLPs, in which the equivariant layers are substituted with regular non-linear ones.
We train one version of the baseline MLP with $\Og(3)$-augmentation, whereas our method is \emph{only trained on non-transformed skeletons}.

We evaluate the performance of the methods on the randomly $\Og(3)$-transformed test data.
The results are presented in Figure~\ref{fig:O5-experiments} (left): our DEH model, trained on the data in a single orientation, captures equivariant features that enable outperforming the non-equivariant baseline trained on the augmented data (MLP Aug). Moreover, DEH consistently outperforms the competing equivariant methods (VN, GVP, CGENN) and the invariant SGM model, demonstrating a favorable speed/performance trade-off, as seen in Figure~\ref{fig:trade-off} (left).

\subsection{$\Og(5)$ Regression}
\label{sec:o5-regression}
Originally introduced by \citet{finzi2021practical}, the task is to model the $\Og(5)$-invariant function $f(\textbf{x}_1, \textbf{x}_2) := \sin(\Vert \textbf{x}_1 \Vert) - \Vert \textbf{x}_2 \Vert^3 / 2 + \frac{\textbf{x}_1^\top \textbf{x}_2}{\Vert \textbf{x}_1 \Vert \Vert \textbf{x}_2 \Vert}$, where the two vectors $\textbf{x}_1 \in \mathbb{R}^5$ and $\textbf{x}_2 \in \mathbb{R}^5$ are sampled from a standard Gaussian distribution to construct train, validation, and test sets.
We use the same training hyperparameters and evaluation setup as \citet{ruhe2023clifford}.

Here, we employ a DEH architecture similar to that in Section~\ref{sec:o3-classification}, and compare it to the equivariant EMLPs \citep{finzi2021practical}, CGENN, VN, and GVP, and non-equivariant MLPs.
Refer to the Appendix (Section~\ref{sec:A_architectures}) for the architecture details.
Our results together with those of the related methods are presented in Figure~\ref{fig:O5-experiments} (center).
As we can see, our DEH is more stable than CGENN, as shown by the dependency over the training set size, and outperforms it in most cases. 
Our method also outperforms the vanilla MLP and the MLP trained with augmentation (MLP~Aug), as well as the $\Og(5)$- and $\SO(5)$-equivariant EMLP and VN, and the invariant SGM and Scalars \citet{villar2021scalars}\footnote{For this task, permutation-invariance is not required, and \citet{villar2021scalars} only used 3 unique elements of the Gram matrix constructed from the input. For the other two tasks, we refer to the results we obtained with SGM (where the entire Gram matrix with sorted rows is used).} methods.

\subsection{$\Og(5)$ Convex hull volume prediction}
Our third experiment addresses the task of estimating the volume of the convex hull generated by $16$ 5D points, described by \citet{ruhe2023clifford}.
The problem is $\Og(5)$-invariant in nature, \ie, rotations and reflections of a convex hull do not change its volume.
We exploit the same network architecture as in Section~\ref{sec:o3-classification} (see the Appendix for details).

We present our results alongside those of the related methods in Figure~\ref{fig:O5-experiments}: our DEH model outperforms all of the equi- and invariant competing methods (including an MLP version of the $\textup{E}(n)$-equivariant approach \citet{satorras2021n}) and in all the scenarios, additionally exhibiting a superior speed/performance trade-off, as seen in Figure~\ref{fig:trade-off} (right).
We outperform \citet{ruhe2023clifford} as well as their MLP result. However, our point-wise MLP implementation slightly outperforms our method on low-data regimes.

\section{Conclusion}
\label{sec:conslusion}
\vspace{-5pt}
In this manuscript, we presented \texttt{Deep Equivariant Hyperspheres} --- $n$D neurons based on spheres and regular $n$-simplexes --- equivariant under orthogonal transformations of dimension $n$.
We defined and analyzed generalized components for a network composed of the proposed neurons, such as equivariant bias, non-linearity, and multi-layer configuration (see Section~\ref{sec:the_meat} and the ablations in the Appendix).

In addition, we proposed the invariant operator \eqref{eq:Delta} modeling the relation between two points and a sphere, inspired by the work of \citet{li2001generalized}, and demonstrated its effectiveness (see the Appendix).
We evaluated our method on both synthetic and real-world data and demonstrated the utility of the developed theoretical framework in $n$D by outperforming the competing methods and achieving a favorable speed/performance trade-off (see Figure~\ref{fig:trade-off}).
Investigating the design of more advanced equivariant architectures of the proposed equivariant hyperspheres forms a clear direction for future work.

\section*{Limitations}
The focus of this paper is on the $\Og(n)$-equivariance, with $n > 3$, and our model architecture employed in the experiments was designed to compare with recent related methods, \eg, \citet{finzi2021practical, ruhe2023clifford}, using the small-scale $\Og(5)$ tasks presented in them.
To address scalability, one could wrap our equivariant hyperspheres in some kind of graph neural network (GNN) framework, \eg, DGCNN \citep{wang2019dgcnn}, thus utilizing the spheres in local neighborhoods.

Additionally, all the tasks considered in the experiments require invariant model output.
Therefore in the current version, we perform modeling of the interaction between the points using their $\Og(n)$-equivariant features (outputs of the proposed equivariant hyperspheres) only to produce invariant features \eqref{eq:Delta}. 
For tasks that require equivariant output, not considered in this work, this interaction needs to happen such that the equivariance is preserved (\eg, using Lemma 3 by \citet{villar2021scalars}).

Finally, if translation equivariance is additionally desired, \ie, the full $\textup{E}(n)$ group, a common way to address this is by centering the input point set by subtracting the mean vector, and then adding this vector to the model output.

\section*{Impact statements}
This paper presents work whose goal is to advance the field of Machine Learning. There are many potential societal consequences of our work, none of which we feel must be specifically highlighted here. An exception is possibly the area of molecular physics with applications in material science; the development of new materials might have a significant impact on sustainability.
\section*{Acknowledgments}
This work was supported by the Wallenberg AI, Autonomous Systems and Software Program (WASP), by the Swedish Research Council through a grant for the project Uncertainty-Aware Transformers for Regression Tasks in Computer Vision (2022-04266), and the strategic research environment ELLIIT. The computations were enabled by resources provided by the National Academic Infrastructure for Supercomputing in Sweden (NAISS) partially funded by the Swedish Research Council through grant agreement no. 2022-06725, and by the Berzelius resource provided by the Knut and Alice Wallenberg Foundation at the National Supercomputer Centre.

We highly appreciate the useful feedback from the reviewers. 
We thank Erik Darpö from the Department of Mathematics, Linköping University, for sharing his insights on group theory with us.

\bibliography{example_paper}
\bibliographystyle{icml2024}

\newpage
\appendix
\onecolumn
\section{Additional background}
\label{sec:A_background}

\subsection{Steerability}
\label{sec:A_steerability}
According to \citet{freeman1991design}, a function is called \textit{steerable} if it can be written as a linear combination of rotated versions of itself, as also alternatively presented by \citet{knutsson1992aframework}.
In 3D, $f^{\textbf{\textit{R}}}(x, y, z)$ is thus said to steer if
\begin{equation}
	\label{eq:3d_steerability_constraint}
	f^{\textbf{\textit{R}}}(x, y, z) = \sum_{j=1}^{M}v_j(\textbf{\textit{R}}) f^{\textbf{\textit{R}}_{j}}(x, y, z)~,
\end{equation}
where $f^{\textbf{\textit{R}}}(x, y, z)$ is $f(x, y, z)$ rotated by $\textbf{\textit{R}} \in \SO(3)$, and each $\textbf{\textit{R}}_j \in \SO(3)$ orients the corresponding $j$th basis function.

\citet{freeman1991design} further describe the conditions under which the 3D steerability constraint \eqref{eq:3d_steerability_constraint} holds and how to find the minimum number of basis functions, that must be uniformly distributed in space.

In this context, \citet{melnyk2022steerable} showed that in order to steer a spherical neuron defined in \eqref{eq:spherical_neuron} \citep{perwass2003spherical, melnyk2020embed}, one needs to have a minimum of fours basis functions, \ie, rotated versions of the original spherical neuron. 
This, together with the condition of the uniform distribution of the basis functions, leads to the regular tetrahedron construction of the steerable 3D spherical neuron in \eqref{eq:sphere_filter_bank}.

\section{Numeric instances for $n=\{2, 3, 4\}$}
\label{sec:A_numeric_instances}
To facilitate the reader's understanding of the algebraic manipulations in the next section, herein, we present numeric instances of the central components of our theory defined in \eqref{eq:simplex} and \eqref{eq:nd_basis_matrix}, for the cases $n=2$, $n=3$, and $n=4$.
For convenience, we write the vertices of the regular simplex \eqref{eq:simplex} as the $n\times (n+1)$ matrix $\textbf{P}_n = \begin{bmatrix}
    \textbf{p}_i
\end{bmatrix}_{i=1\ldots n+1}$.

\paragraph{$n=2:$} $\textbf{P}_2 = \frac{1}{\sqrt{2}} \begin{bmatrix}
1 & \phantom{-}(\sqrt{3}-1)/2 & -(\sqrt{3}+1)/2 \\
1 & -(\sqrt{3}+1)/2 & \phantom{-}(\sqrt{3}-1)/2 \\
\end{bmatrix}$, \quad\quad $p=\sqrt{3/2}$,
 
\paragraph{$\phantom{n=2:}$}
$\textbf{M}_2 = \frac{1}{\sqrt{3}} \begin{bmatrix}
            1 & \phantom{-}(\sqrt{3}-1)/2 & -(\sqrt{3}+1)/2 \\
            1 & -(\sqrt{3}+1)/2 & \phantom{-}(\sqrt{3}-1)/2 \\
            1 & 1 & 1 \\
\end{bmatrix}$.

\paragraph{$n=3:$}
$\textbf{P}_3 = \frac{1}{\sqrt{3}}
\begin{bmatrix}
1  & \phantom{-}1  & -1  & -1 \\
1  & -1  & \phantom{-}1  & -1 \\
1  & -1  & -1 & \phantom{-}1
\end{bmatrix}$,
\quad\quad\quad\quad\quad\quad\quad\quad\,$p=2/\sqrt{3}$,

\paragraph{$\phantom{n=3:}$}
$\textbf{M}_3 =\frac{1}{2}
	\begin{bmatrix}
		1 &  \phantom{-}1 & -1             &  -1   \\
		1 & -1            &  \phantom{-}1  &  -1   \\
	    1 & -1            & -1             &  \phantom{-}1   \\
	    1 &  \phantom{-}1 &  \phantom{-}1  &  \phantom{-}1   \\
	\end{bmatrix}$.
 
\paragraph{$n=4:$}
$\textbf{P}_4 = \frac{1}{2}
\begin{bmatrix}
1 & \phantom{\;}(3\sqrt{5} - 1)/4 & -(\sqrt{5} + 1)/4 & -(\sqrt{5} + 1)/4 & -(\sqrt{5} + 1)/4 \\
1 & -(\sqrt{5} + 1)/4 & \phantom{\;}(3\sqrt{5} - 1)/4 & -(\sqrt{5} + 1)/4 & -(\sqrt{5} + 1)/4 \\
1 & -(\sqrt{5} + 1)/4 & -(\sqrt{5} + 1)/4 & \phantom{\;}(3\sqrt{5} - 1)/4 & -(\sqrt{5} + 1)/4 \\
1 & -(\sqrt{5} + 1)/4 & -(\sqrt{5} + 1)/4 & -(\sqrt{5} + 1)/4 & \phantom{\;}(3\sqrt{5} - 1)/4
\end{bmatrix}$,

\paragraph{$\phantom{n=4:}$} \quad\quad\quad\quad\quad\quad\quad\quad\quad\quad\quad\quad\quad\quad\quad\quad\quad\quad\quad\quad \quad $p = \sqrt{5}/2$,

\paragraph{$\phantom{n=4:}$}
$\textbf{M}_4 = \frac{1}{\sqrt{5}}
\begin{bmatrix}
1 & \phantom{\;}(3\sqrt{5} - 1)/4 & -(\sqrt{5} + 1)/4 & -(\sqrt{5} + 1)/4 & -(\sqrt{5} + 1)/4 \\
1 & -(\sqrt{5} + 1)/4 & \phantom{\;}(3\sqrt{5} - 1)/4 & -(\sqrt{5} + 1)/4 & -(\sqrt{5} + 1)/4 \\
1 & -(\sqrt{5} + 1)/4 & -(\sqrt{5} + 1)/4 & \phantom{\;}(3\sqrt{5} - 1)/4 & -(\sqrt{5} + 1)/4 \\
1 & -(\sqrt{5} + 1)/4 & -(\sqrt{5} + 1)/4 & -(\sqrt{5} + 1)/4 & \phantom{\;}(3\sqrt{5} - 1)/4 \\
1 & 1 & 1 & 1 & 1
\end{bmatrix}$.

\section{Complete proofs}
\label{sec:A_proofs}
In this section, we provide complete proof of the propositions and theorems stated in the main paper.

\begin{theorem*}
\label{th:A_the_theorem}
(Restating Theorem~\ref{th:the_theorem}:)The neuron $\textup{\textbf{F}}_n(\, \cdot \,;\textbf{\textit{S}}): \mathbb{R}^{n+2} \rightarrow \mathbb{R}^{n+1}$ defined in \eqref{eq:sphere_NDfilter_bank} is $\Og(n)$-equivariant.
\end{theorem*}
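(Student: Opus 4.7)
The plan is to derive \eqref{eq:nd_filter_equivariance} by exploiting the orthogonality of $\textup{\textbf{M}}_n$ from Proposition~\ref{pr:M_orthogonal} to convert it into a much simpler statement about the matrix $\textup{\textbf{M}}_n B_n(\textbf{\textit{S}})$. Because $\textup{\textbf{M}}_n \textup{\textbf{M}}_n^\top = \textbf{I}_{n+1}$, pre-multiplying the desired identity $V_n B_n(\textbf{\textit{S}}) \textbf{\textit{X}} = B_n(\textbf{\textit{S}}) \textbf{\textit{R}} \textbf{\textit{X}}$ by $\textup{\textbf{M}}_n$ and expanding $V_n$ using \eqref{eq:V_n} reduces the claim to
\begin{equation*}
    \textbf{\textit{R}}_O\, \textbf{\textit{R}}\, \textbf{\textit{R}}_O^\top \bigl(\textup{\textbf{M}}_n B_n(\textbf{\textit{S}})\bigr)\, \textbf{\textit{X}} \;=\; \bigl(\textup{\textbf{M}}_n B_n(\textbf{\textit{S}})\bigr)\, \textbf{\textit{R}}\, \textbf{\textit{X}},
\end{equation*}
where $\textbf{\textit{R}}_O$ and $\textbf{\textit{R}}$ are extended with $1$'s on the diagonal to the appropriate dimension on each side.

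Next I would compute $\textup{\textbf{M}}_n B_n(\textbf{\textit{S}})$ explicitly. Each rotated sphere $\textbf{\textit{v}}_i = \textbf{\textit{R}}_O^\top \textbf{\textit{R}}_{T_i} \textbf{\textit{R}}_O \textbf{\textit{S}}$ takes the form $\bigl[\,\lVert\textbf{c}_0\rVert\, \textbf{\textit{R}}_O^\top \textbf{p}_i;\; \tfrac{1}{2}(\lVert\textbf{c}_0\rVert^2 - r^2);\; 1\,\bigr]^\top$, because $\textbf{\textit{R}}_{T_i}$ sends $\textbf{p}_1$ to $\textbf{p}_i$ and acts as the identity on the two embedding coordinates of \eqref{hypersphere_in_r}. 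Writing $\textup{\textbf{M}}_n B_n(\textbf{\textit{S}}) = \sum_{i=1}^{n+1} \textbf{m}_i \textbf{\textit{v}}_i^\top$, the three blocks of columns simplify with the aid of Lemma~\ref{lem:lemma} and the orthogonality of $\textup{\textbf{M}}_n$: the first $n$ columns collapse via $\sum_i \textbf{m}_i \textbf{p}_i^\top = \textup{\textbf{M}}_n \textbf{\textup{P}}_n^\top = p\,[\textbf{\textup{I}}_n;\, \textbf{0}^\top]$, and the last two columns simplify via $\sum_i \textbf{m}_i = [\textbf{0};\, \sqrt{n+1}]^\top$, which follows from $\sum_i \textbf{p}_i = \textbf{0}$ (itself a direct consequence of $\textup{\textbf{M}}_n \in \Og(n+1)$). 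The result is the block-diagonal matrix
\begin{equation*}
    \textup{\textbf{M}}_n B_n(\textbf{\textit{S}}) = \begin{bmatrix} p\,\lVert\textbf{c}_0\rVert\, \textbf{\textit{R}}_O & \textbf{0} & \textbf{0} \\ \textbf{0}^\top & \tfrac{\sqrt{n+1}}{2}(\lVert\textbf{c}_0\rVert^2 - r^2) & \sqrt{n+1} \end{bmatrix}.
\end{equation*}

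With this normal form in hand the equivariance check becomes immediate. Evaluating the left-hand side on $\textbf{\textit{X}} = [\textbf{x};\, -1;\, -\tfrac{1}{2}\lVert\textbf{x}\rVert^2]^\top$ uses only the first block, where $\textbf{\textit{R}}_O \textbf{\textit{R}}\, \textbf{\textit{R}}_O^\top \cdot p\lVert\textbf{c}_0\rVert\, \textbf{\textit{R}}_O\, \textbf{x}$ collapses to $p\lVert\textbf{c}_0\rVert\, \textbf{\textit{R}}_O \textbf{\textit{R}}\, \textbf{x}$, while the second (scalar) block is unchanged because $\textbf{\textit{R}}$ is extended with $1$'s on that coordinate. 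The right-hand side evaluates on $\textbf{\textit{R}}\textbf{\textit{X}} = [\textbf{\textit{R}}\textbf{x};\, -1;\, -\tfrac{1}{2}\lVert\textbf{x}\rVert^2]^\top$, using $\lVert\textbf{\textit{R}}\textbf{x}\rVert = \lVert\textbf{x}\rVert$, to exactly the same vector. This establishes the equivariance of $\textup{\textbf{F}}_n(\,\cdot\,;\textbf{\textit{S}})$.

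The main obstacle I anticipate is the block-diagonal simplification of $\textup{\textbf{M}}_n B_n(\textbf{\textit{S}})$; once Lemma~\ref{lem:lemma} supplies the identity for the first block and the centroid identity $\sum_i \textbf{p}_i = \textbf{0}$ handles the remaining two columns, the block structure makes the commutation with $\textbf{\textit{R}}_O\textbf{\textit{R}}\textbf{\textit{R}}_O^\top$ transparent, and the rest of the argument is essentially bookkeeping. An added benefit of this route is that it illuminates why $V_n$ lies in a strict subgroup of $\Og(n+1)$: in the simplex basis, the proposed neuron acts only on the $n$-dimensional ``point'' subspace, and the $(n{+}1)$-st coordinate is a fixed invariant direction (the DC component mentioned after the theorem).
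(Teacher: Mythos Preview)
Your proof is correct and follows essentially the same route as the paper's: both express the rows of $B_n(\textbf{\textit{S}})$ via the simplex vertices $\textbf{p}_i$ (using $\textbf{\textit{R}}_{T_i}\textbf{\textit{R}}_O\textbf{c}_0=\lVert\textbf{c}_0\rVert\textbf{p}_i$), then invoke Lemma~\ref{lem:lemma} together with the centroid identity $\sum_i\textbf{p}_i=\textbf{0}$ to collapse the expression. Your pre-multiplication by $\textup{\textbf{M}}_n$ to expose the block-diagonal form of $\textup{\textbf{M}}_n B_n(\textbf{\textit{S}})$ is a tidy reorganization of the same computation the paper performs directly on $V_n B_n(\textbf{\textit{S}})\textbf{\textit{X}}$ in its equation~\eqref{eq:LHS}.
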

\begin{proof}
We need to show that \eqref{eq:equivariance} holds for $\textup{\textbf{F}}_n(\, \cdot \,;\textbf{\textit{S}})$.

 We substitute \eqref{eq:V_n} to the LHS and \eqref{eq:sphere_NDfilter_bank} to the RHS, and obtain
\begin{equation}
    \label{eq:A_nd_filter_equivariance}
     V_n \, B_n(\textbf{\textit{S}}) \, \textbf{\textit{X}} = B_n(\textbf{\textit{S}})\,\textbf{\textit{R}}\textit{\textbf{X}}~.
\end{equation}

Keeping in mind that the $(n+1)$-th and $(n+2)$-th components, $s_{n+1}$ and $s_{n+2}$, of the sphere $\textbf{\textit{S}}\in \mathbb{R}^{n+2}$ with center $\textbf{c}_0 \in \mathbb{R}^n$ \eqref{hypersphere_in_r} are $\Og(n)$-invariant, as well as our convention on writing the rotation matrices (see the last paragraph of Section~\ref{sec:equivariance}), we rewrite the $(n+1) \times (n+2)$ matrix $B_n(\textbf{\textit{S}})$ using its definition \eqref{eq:sphere_NDfilter_bank}:
\begin{equation}
    \label{eq:B_n}
    B_n(\textbf{\textit{S}}) = 
	\begin{bmatrix}
		(\textbf{\textit{R}}_O^{\top}\, \textbf{\textit{R}}_{T_i}\, \textbf{\textit{R}}_O\, \textbf{\textit{S}})^\top\\
	\end{bmatrix}_{i=1}^{n+1} = 
        \begin{bmatrix}
	\textbf{c}_0^\top\textbf{\textit{R}}_O^\top\, \textbf{\textit{R}}_{T_i}^\top\, \textbf{\textit{R}}_O\ 
	    & s_{n+1} & s_{n+2}
	\end{bmatrix}_{i=1}^{n+1}.
\end{equation}

By definition of the rotation $\textbf{\textit{R}}_O$ \eqref{eq:sphere_NDfilter_bank}, we have that $\textbf{\textit{R}}_O\,\textbf{c}_0 = \lVert \textbf{c}_0 \rVert \textbf{p}_1$, where $\textbf{p}_1 \in \mathbb{R}^n$ is the first vertex of the regular simplex according to \eqref{eq:simplex}. 
Since $\textbf{\textit{R}}_{T_i}$ rotates $\textbf{p}_1$ into $\textbf{p}_i$, we obtain 
\begin{equation}
    \label{eq:c_and_P}
    \textbf{\textit{R}}_{T_i} \textbf{\textit{R}}_O\,\textbf{c}_0 = \lVert \textbf{c}_0 \rVert \textbf{p}_i~,\quad 1 \leq i \leq n+1~.
\end{equation}

Thus, we can write the RHS of \eqref{eq:A_nd_filter_equivariance} using the sphere definition \eqref{hypersphere_in_r} as
\begin{equation}
    \label{eq:RHS}
B_n(\textbf{\textit{S}})\,\textbf{\textit{R}}\textbf{\textit{X}} = 
        \begin{bmatrix}
	\lVert \textbf{c}_0 \rVert \textbf{p}_i^\top\, \textbf{\textit{R}}_O\ &
	    s_{n+1} & s_{n+2}
	\end{bmatrix}_{i=1}^{n+1} \textbf{\textit{R}}\textbf{\textit{X}} =
 \begin{bmatrix}
	\lVert \textbf{c}_0 \rVert \, \textbf{P}_n^\top\, \textbf{\textit{R}}_O \textbf{\textit{R}}\ & 
	    s_{n+1}\, \textbf{1} & s_{n+2}\, \textbf{1}   
   \end{bmatrix}\,\textbf{\textit{X}}.
\end{equation}
We now use the definition of $V_n$ from \eqref{eq:V_n} along with \eqref{eq:M_times_P^T}, \eqref{eq:M_and_P}, and \eqref{eq:c_and_P} to rewrite the LHS of \eqref{eq:A_nd_filter_equivariance} as
\begin{equation}
    \label{eq:LHS}
    \begin{aligned}
      V_n \, B_n(\textbf{\textit{S}}) \textbf{\textit{X}} &= \textbf{M}_n^{\top}\textbf{\textit{R}}_O\,\textbf{\textit{R}}\,\textbf{\textit{R}}_O^\top\,\textbf{M}_n \begin{bmatrix}
\begin{bmatrix}
    \lVert \textbf{c}_0 \rVert \, \textbf{P}_n^\top\,  & s_{n+1}\, \textbf{1}
\end{bmatrix}\textbf{\textit{R}}_O \ & s_{n+2}\, \textbf{1}
   \end{bmatrix}\,\textbf{\textit{X}} \\
   & = \textbf{M}_n^{\top}\textbf{\textit{R}}_O\,\textbf{\textit{R}}\,\textbf{\textit{R}}_O^\top\,\begin{bmatrix}
\begin{bmatrix}
    p\,\lVert \textbf{c}_0 \rVert
    \begin{bmatrix}
        \textbf{I}_n\\
        \textbf{0}^\top        
    \end{bmatrix}\,
    \begin{matrix}
        & \textbf{0} \\ & {p}\sqrt{n}\,s_{n+1} 
    \end{matrix}
\end{bmatrix}
    \textbf{\textit{R}}_O &
    \begin{matrix}
       \textbf{0} \\[1.5ex]
       {p}\sqrt{n}\,s_{n+2} \\
    \end{matrix}
   \end{bmatrix}\,\textbf{\textit{X}}\\
    & = \textbf{M}_n^{\top}\textbf{\textit{R}}_O\,\textbf{\textit{R}}\,
    \begin{bmatrix}
    \begin{bmatrix}
        p\,\lVert \textbf{c}_0 \rVert\ \, \textbf{I}_n, & \textbf{0} \\ \textbf{0}^\top & {p}\sqrt{n}\,
            s_{n+1} 
    \end{bmatrix}\textbf{\textit{R}}_O^\top\,\textbf{\textit{R}}_O & 
    \begin{matrix}
        \textbf{0} \\[1ex]
        {p}\sqrt{n}\, s_{n+2} \\
    \end{matrix}
   \end{bmatrix}\,\textbf{\textit{X}}\\
    & = 
    \frac{1}{p}\begin{bmatrix}
        \textbf{P}_n^\top\, \textbf{\textit{R}}_O\,\textbf{\textit{R}} & n^{-1/2} \textbf{1}
        \end{bmatrix}
    \begin{bmatrix}    p\,\lVert \textbf{c}_0 \rVert\, \textbf{I}_n& \textbf{0} & \textbf{0} \\
        \textbf{0}^\top & {p}\sqrt{n}\,
            s_{n+1} &{p}\sqrt{n}\, s_{n+2} 
   \end{bmatrix}\,\textbf{\textit{X}}\\
    & = 
    \begin{bmatrix}
        \,\lVert \textbf{c}_0 \rVert\, \textbf{P}_n^\top\, \textbf{\textit{R}}_O\,\textbf{\textit{R}}
      & \frac{\sqrt{n}}{\sqrt{n}}\, s_{n+1}\, \textbf{1} & \frac{\sqrt{n}}{\sqrt{n}}\, s_{n+2}\, \textbf{1}
    \end{bmatrix}
  \,\textbf{\textit{X}} \quad = \quad B_n(\textbf{\textit{S}})\,\textbf{\textit{R}}\textbf{\textit{X}}.
    \end{aligned}
\end{equation}
\end{proof}

\begin{proposition*}
(Restating Proposition~\ref{prop:cascading}:) Given that all operations involved in the procedure~\ref{eq:algorithm} are $\Og(n)$-equivariant, its output will also be  $\Og(n)$-equivariant.
\end{proposition*}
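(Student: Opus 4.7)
The plan is to reduce the statement to the elementary fact that the composition of equivariant maps is equivariant, and then verify that every atomic building block of procedure \eqref{eq:algorithm} is equivariant (with the appropriate representation on the output side). Concretely, I would proceed by induction on the layer index of the cascade, carrying along the representation of $\textbf{\textit{R}} \in \Og(n)$ as it is transported by each layer, starting with $\textbf{\textit{R}}$ itself on the input and ending with some $V^{(d)} \in \Og(n+d)$ on the output.

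First I would inventory the four kinds of atomic operations appearing in \eqref{eq:algorithm} and cite/argue the equivariance of each:
\begin{itemize}[topsep=0pt,itemsep=0pt]
    \item The \texttt{normalize} step $\textbf{\textit{X}} \mapsto \textbf{\textit{X}}/\lVert\textbf{\textit{X}}\rVert$ is equivariant by Proposition~\ref{prop:normalization}; the same argument extends to the non-linear normalization~\eqref{eq:non-linear_norm}, since only the invariant scalar $\lVert\textbf{\textit{Y}}\rVert$ enters the denominator.
    \item Addition of the scalar bias, $\textbf{\textit{X}} \mapsto \textbf{\textit{X}} + b\,\textbf{1}$, is equivariant by Proposition~4, using that $V_k \textbf{1} = \textbf{1}$ for all representations $V_k$ of the form \eqref{eq:V_n}.
    \item The \texttt{embed} map of \eqref{hypersphere_in_r} is equivariant because its first $k$ coordinates are just the identity (hence transform by $\textbf{\textit{R}}$), the constant $-1$ is invariant, and $-\tfrac{1}{2}\lVert\textbf{x}\rVert^2$ is $\Og(k)$-invariant since $\textbf{\textit{R}}$ preserves norms; this is exactly the ``append $1$s on the diagonal'' convention stated at the end of Section~\ref{sec:equivariance}.
    \item The hypersphere neuron $\textbf{\textup{F}}_k$ is $\Og(k)$-equivariant by Theorem~\ref{th:the_theorem}, turning an input representation $\textbf{\textit{R}}$ into the output representation $V_k \in G < \Og(k+1)$ defined in \eqref{eq:V_n}.
\end{itemize}

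With these in hand, the induction is routine. Let $V^{(0)} = \textbf{\textit{R}}$. For the inductive step, assume the intermediate feature after $l$ layers, living in $\mathbb{R}^{n+l}$, transforms under $V^{(l)} \in \Og(n+l)$. The operations of layer $l+1$ (normalize, bias, embed, $\textbf{\textup{F}}_{n+l}$) are compositions of the atomic maps above, and each preserves equivariance with respect to the input representation; by Theorem~\ref{th:the_theorem} applied to the $(n+l)$-dimensional input, the output of $\textbf{\textup{F}}_{n+l}$ transforms under a new representation $V^{(l+1)} \in G < \Og(n+l+1)$. Iterating gives the desired conclusion for the final output.

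The main obstacle, and it is mild, is bookkeeping: Theorem~\ref{th:the_theorem} is formulated with an input transforming by a generic element of $\Og(n)$, while at layer $l$ the input actually transforms by the structured element $V^{(l)}$ produced upstream. I would emphasize that the proof of Theorem~\ref{th:the_theorem} uses only $V^{(l)} \in \Og(n+l)$ (not its specific provenance), so it applies verbatim at every layer. A secondary point worth stating explicitly is that the equivariance is preserved even when each layer has width $K_l > 1$: each of the $K_l$ hyperspheres is independently equivariant in the sense of \eqref{eq:nd_filter_equivariance}, and stacking along an internal channel dimension that is fixed by the group action does not interfere with the argument.
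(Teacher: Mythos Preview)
Your proposal is correct and takes essentially the same approach as the paper: both reduce to the standard fact that a composition of equivariant maps is equivariant, with the paper writing out the chain $\textbf{F}(\textbf{\textit{R}}\textbf{x}) = \boldsymbol{\Phi}_d(\ldots\boldsymbol{\Phi}_1(\textbf{\textit{R}}\textbf{x})) = \rho_d(\textbf{\textit{R}})\textbf{F}(\textbf{x})$ in one line and taking the equivariance of the atomic operations as the proposition's hypothesis. Your version is simply more explicit, inventorying each atomic step and handling the bookkeeping point (that Theorem~\ref{th:the_theorem} at layer $l$ only needs $V^{(l)} \in \Og(n+l)$) that the paper leaves implicit.
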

\begin{proof}
Let $\textbf{\textit{R}} \in \Og(n)$ be an orthogonal transformation, $\rho_i(\textbf{\textit{R}})$ the representation of $\textbf{\textit{R}}$ in the respective space, \eg, \eqref{eq:V_n} for the equivariant hypersphere output, and $\textbf{\textit{x}}\in \mathbb{R}^n$ be the input to the procedure~\ref{eq:algorithm}. 
We denote the output of the procedure~\ref{eq:algorithm} as $\textbf{F}(\textbf{x})$, where $\textbf{F}$ is the composition of all operations in the procedure~\ref{eq:algorithm}.
Since each operation is equivariant, \eqref{eq:equivariance} holds for each operation $\boldsymbol{\Phi}$, \ie, we have $\boldsymbol{\Phi}_i(\rho_i (\textbf{\textit{R}})\textbf{\textit{X}}) = \rho_{i+1}(\textbf{\textit{R}})\boldsymbol{\Phi}(\textbf{\textit{X}})$. 
Consider now the output $\textbf{{F}}(\textbf{x})$ and the transformed output $\textbf{F}(\textbf{\textit{R}}\textbf{x})$. Since each operation in $\textbf{F}$ is equivariant, we have:
\noindent
$\begin{aligned}
\textbf{F}(\textbf{\textit{R}}\textbf{x}) = \boldsymbol{\Phi}_d(\boldsymbol{\Phi}_{d-1}(\ldots\boldsymbol{\Phi}_2(\boldsymbol{\Phi}_1(\textbf{\textit{R}}\textbf{x})))) = \rho_d(\textbf{\textit{R}}) \boldsymbol{\Phi}_d(\boldsymbol{\Phi}_{d-1}(\ldots \boldsymbol{\Phi}_2(\boldsymbol{\Phi}_1(\textbf{x})))) = \rho_d(\textbf{\textit{R}}) \textbf{F}(\textbf{x}).
\end{aligned}$
Thus, the output of the procedure in~\eqref{eq:algorithm} is equivariant, as desired.
\end{proof}

\section{Architecture details}
\label{sec:A_architectures}
\begin{figure}[H]
    \includegraphics[width=\linewidth]{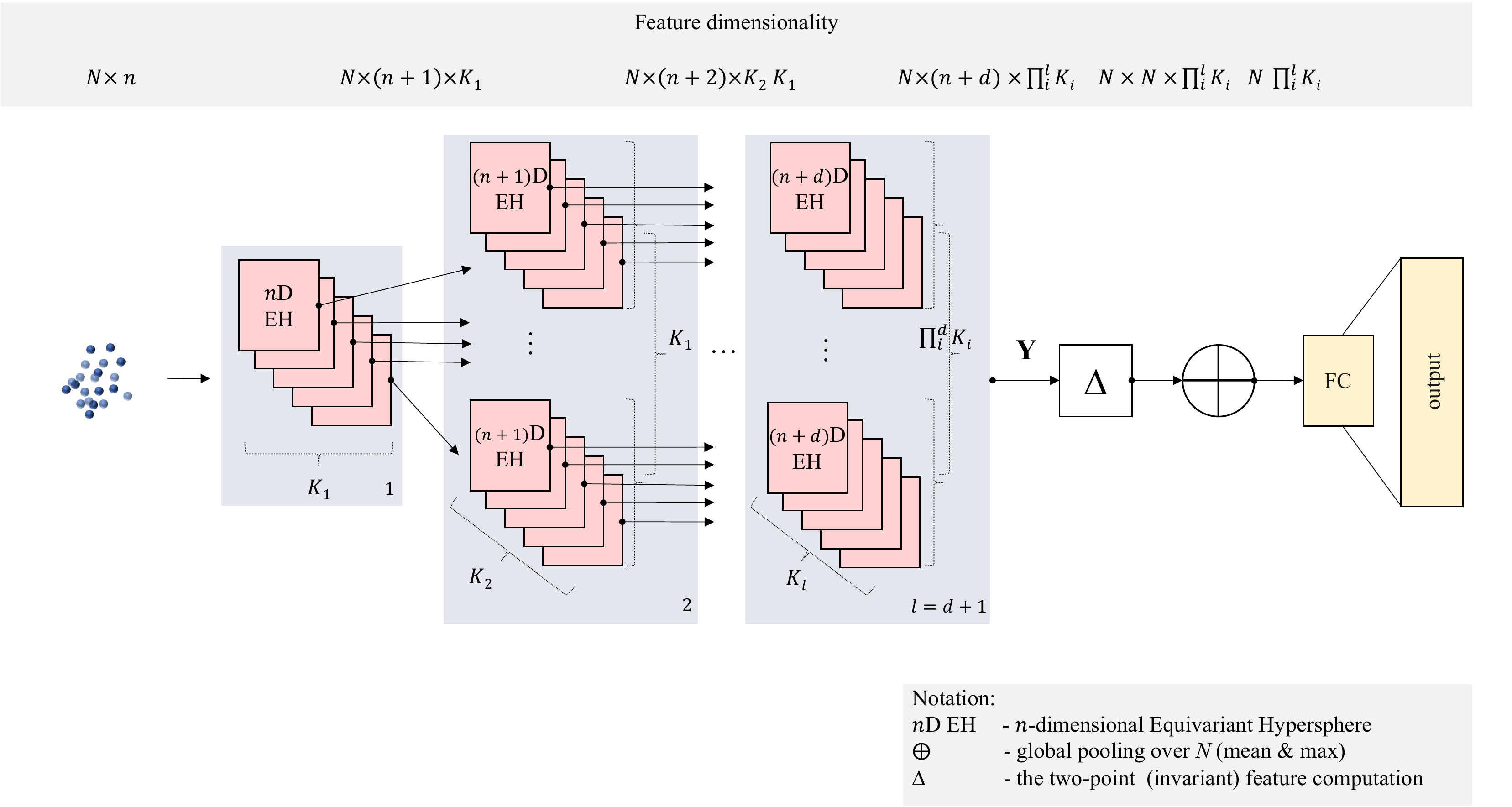}
    \caption{Architecture of our DEH model.
    All the operations are point-wise, \ie, shared amongst $N$ points. Each subsequent layer of equivariant hyperspheres contains $K_l$ neurons for each of the $\prod_i^{d} K_i$ preceding layer channels. The architectures of the non-permutation-invariant variants differ only in that the global aggregation function over $N$ is substituted with the flattening of the feature map.}
    \label{fig:deh_architecture}
\end{figure}
In this section, we provide an illustration of the architectures of our DEH model used in the experiments in Section~\ref{sec:demonstration}.
By default, we learned non-normalized hyperspheres and equipped the layers with the equivariant bias and the additional non-linearity (non-linear normalization in \eqref{eq:non-linear_norm}).
The number of learnable parameters corresponds to the competing methods in the experiments, as seen in Table~\ref{tab:num_parameters}.
The DEH architectures are presented in Table~\ref{tab:architectures}.
\begin{table*}
    \centering
    \begin{tabularx}{\linewidth}{@{\extracolsep{\fill}}lcccc}
        \toprule
        Methods & $\Og(3)$ Action recognition & $\Og(5)$ Regression & $\Og(5)$ Convex hulls \\
        \midrule
         CGENN & 9.1K & 467 $^\textbf{*}$& 58.8K  \\
        \midrule
         VN & 8.3K & 924 & N/A$^{\textbf{**}}$ \\
        \midrule
         GVP & 8.8K & 315 & N/A$^{\textbf{**}}$  \\
        \midrule
         Scalars & - & 641 & - \\
         \midrule
         SGM & 8.5K & 333 & 58.9K \\
        \midrule
         MLP &  8.3K & 447K \citep{finzi2021practical} & N/A$^{\textbf{**}}$ | 58.2K \\
        \midrule
        \textbf{ DEH (Ours)} & \textbf{8.1K} & \textbf{275} & \textbf{49.8K} \\
        \bottomrule
    \end{tabularx}
    \caption{Total number of parameters of the models in the experiments presented in Figure~\ref{fig:O5-experiments}. $^\textbf{*}$see Section~\ref{sec:A_o5_regression}. $^{\textbf{**}}$an unknown exact number of parameters, somewhere in the range of the other numbers in the column, as indicated by \citet{ruhe2023clifford}.}
    \label{tab:num_parameters}
\end{table*}

\begin{table*}
    \centering
    \begin{tabularx}{\linewidth}{@{\extracolsep{\fill}}lccccc}
        \toprule
        Methods & Equiv. layer sizes [$K_1$, $K_2$, \dots] & Invariant operation & FC-layer size & Total \#params & Performance \\
        \midrule
        \multicolumn{5}{c}{$\Og(3)$ Action recognition} & Acc., \% ($\uparrow$) \\
        \midrule
         DEH & [8, 6, 2] & sum & 32 & 7.8K & 69.86\\
         DEH & [3, 2] & $\Delta_E$ & 32 & 8.1K & 82.92 \\
         \textbf{DEH} & [3, 2] & $\Delta$ & 32 & 8.1K & \textbf{87.36} \\
        \midrule
        \multicolumn{5}{c}{$\Og(5)$ Regression} & MSE ($\downarrow$) \\
        \midrule
         DEH & [2] &  $l^2\text{-norm}$ & 32 & 343 & 0.0084 \\
         DEH & [2] & $\Delta_E$ & 32 & 275 &  0.0033 \\
         \textbf{DEH} & [2] & $\Delta$ & 32 & 275 &  \textbf{0.0007} \\
        \midrule
        \multicolumn{5}{c}{$\Og(5)$ Convex hulls} & MSE ($\downarrow$) \\
        \midrule
         DEH & [32, 24] & $l^2\text{-norm}$ & 32 & 57.2K &  7.5398\\
         DEH & [8, 6] & $\Delta_E$ & 32 & 49.8K & 1.3843 \\
         \textbf{DEH} & [8, 6] & $\Delta$ & 32 & 49.8K & \textbf{1.3166}\\
        \bottomrule
    \end{tabularx}
    \caption{Our DEH model architectures employed in the experiments and ablation on the invariant feature computation.
    The models are trained on all the available training data.
    The results of the models in \textbf{bold} are presented in Figure~\ref{fig:O5-experiments}.
    DEH for the first and the third tasks is also permutation-invariant.}
    \label{tab:architectures}
\end{table*}

\subsection{$\Og(5)$ Regression architectures clarification}
\label{sec:A_o5_regression}
Note that we used the CGENN model architecture, containing 467 parameters, from the first version of the manuscript \citep{ruhe2023clifford}, and the corrected evaluation protocol from the latest version.
Their model in the latest version has three orders of magnitude more parameters, which is in the range of the EMLPs \citep{finzi2021practical} (see Figure~\ref{fig:O5-experiments}, center) containing 562K parameters.
However, the error reduction thus achieved is only of one order of magnitude \citep{ruhe2023clifford} and only in the maximum training data size regime, which is why we compared the models within the original size range (see Table~\ref{tab:architectures} and Figure~\ref{fig:O5-experiments}).

Besides, since the number of points in this task is only 2 and the permutation invariance is not required (no aggregation over $N$; see Figure~\ref{fig:deh_architecture} and the caption), we used only three out of four entries of $\Delta$ \eqref{eq:Delta} in our model, \ie only one of the identical off-diagonal elements.
Also, we disabled the bias component in our model for this experiment and achieved a lower error ($0.0007$ vs. $0.0011$).

\section{Ablations}
\label{sec:A_ablations}
\subsection{Invariant feature computation} In Table~\ref{tab:architectures}, we show the effectiveness of the invariant operator $\Delta$ \eqref{eq:Delta}, modeling the relation between two points and a sphere (see Section~\ref{sec:higher_order}), over other invariant operations such as sum or $l^2$-norm, applied to the $N \times (n+d)$ matrix $\textbf{Y}$ (see Section~\ref{sec:deep_propagation} and Figure~\ref{fig:deh_architecture}) row-wise.

We also considered including the edge computation in $\Delta$, as discussed in Section~\ref{sec:higher_order}, in the following way:
\begin{equation}
\label{eq:Delta_E}
    \Delta_E = \textbf{E} \odot \textbf{Y} \, \textbf{Y}^{\top},
\end{equation}
where $\textbf{E} := \frac{1}{2} \, ({\lVert \textbf{x}_{i} - \textbf{x}_{j}\rVert}^2 + \textbf{I}_N ) \,\, \in \mathbb{R}^{N \times N}$ models the edges as the squared distances between the points (with the identity matrix included to also model interactions between a single point and a sphere). This formulation is slightly closer to the original formulation from \citet{li2001generalized} than \eqref{eq:Delta} that we used.
In Table~\ref{tab:architectures}, we present the respective model results.
\subsection{Architecture}
In Table~\ref{tab:multilayer}, we present a comparison between a single- vs. a two-layer DEH (which was employed in the experiments with the results in Figure~\ref{fig:main_figure}).
We note that already with one layer, our model exhibits high performance for the presented tasks.
Increasing the number of layers in our DEH is therefore only marginally advantageous in these cases.
\begin{table*}
    \centering
    \begin{tabularx}{\linewidth}{@{\extracolsep{\fill}}lccc}
        \toprule
        Methods & Equiv. layer sizes [$K_1$, $K_2$, \dots] & Total \#params & Avg.$^*$ performance \\
        \midrule
        \multicolumn{3}{c}{$\Og(3)$ Action recognition} & Acc., \% ($\uparrow$) \\
        \midrule
         DEH & [6] &  8.1K &  70.05 \\
         \textbf{DEH} & [3, 2] & 8.1K & \textbf{70.84} \\
        \midrule
        \multicolumn{3}{c}{$\Og(5)$ Convex hulls} & MSE ($\downarrow$) \\
        \midrule
         DEH & [48] & 49.6K &  2.1633 \\
         \textbf{DEH} & [8, 6] & 49.8K & \textbf{2.1024}\\
        \bottomrule
    \end{tabularx}
    \caption{Our DEH model: single- and two-layer (the results of which are presented in Figure~\ref{fig:O5-experiments}). $^*$The performance is averaged across the models trained on the \textit{various training set sizes} (see Figure~\ref{fig:O5-experiments}).}
    \label{tab:multilayer}
\end{table*}

Bias and learnable normalization ablations are presented in Table~\ref{tab:hyperparameters_ablation}. As we see, the performance of DEH is further improved if the bias is removed, which was also noted in Section~\ref{sec:A_o5_regression}.
A minor improvement is obtained by removing the learnable parameters from the non-linear normalization, $\alpha$ (one per neuron), \eqref{eq:non-linear_norm}, while keeping the bias. 
However, removing both the bias and the learnable parameters from the normalization, results in lower performance.

\begin{table*}
    \centering
    \begin{tabularx}{\linewidth}{@{\extracolsep{\fill}}ccccc}
        \toprule
        Bias & Normalization with learnable parameters, $\alpha_k$ & Total \#params & Avg.$^*$ MSE ($\downarrow$) \\
        \midrule
         \cmark & \cmark & 49.8K  &  2.1024 \\
         \cmark & \xmark & 49.7K  &  2.0936 \\
         \xmark & \cmark & 49.7K  &  \textbf{2.0623}\\
         \xmark & \xmark & 49.7K  &  2.1468 \\
        \bottomrule
    \end{tabularx}
    \caption{Hyperparameter ablation (using the $\Og(5)$ convex hull volume prediction task from Section~\ref{sec:demonstration}: our main DEH model with and without equivariant bias, learnable parameters in the normalization \eqref{eq:non-linear_norm}. $^*$The MSE is averaged across the models trained on the \textit{various training set sizes} (see Figure~\ref{fig:main_figure}).}
    \label{tab:hyperparameters_ablation}
\end{table*}

\end{document}